\theoremstyle{plain}
\newtheorem{theorem}{Theorem}
\newtheorem*{theorem*}{Theorem}
\newtheorem{corollary}[theorem]{Corollary}
\newtheorem{lemma}[theorem]{Lemma}
\def\arXiv#1{arXiv:\href{http://arXiv.org/abs/#1}{#1}}
\theoremstyle{definition}
\newtheorem{definition}[theorem]{Definition}
\theoremstyle{remark}
\newtheorem{remark}[theorem]{Remark}
\newtheorem{example}[theorem]{Example}
\numberwithin{equation}{section}
\numberwithin{theorem}{section}
\newenvironment{pde}{\left\{\begin{array}{rll} } {\end{array}\right.}
\newcommand{\C}{\mathbb C} 
\newcommand{\N}{\mathbb N}
\newcommand{\R}{\mathbb R} 
\renewcommand{\P}{{\mathcal P}}
\newcommand{\M}{{\mathcal M}}
\newcommand{\K}{{\mathbb K}}
\newcommand{\Ra} {\Rightarrow}
\renewcommand{\d}{\mathrm{d}}
\newcommand{\dz}{\,\mathrm{d}z}
\newcommand{\dt}{\,\mathrm{d}t}
\newcommand{\HP}{\mathcal{HP}}
\newcommand{\eps}{\varepsilon}
\newcommand{\average}{{\mathchoice {\kern1ex\vcenter{\hrule height.4pt
width 6pt depth0pt} \kern-9.7pt} {\kern1ex\vcenter{\hrule
height.4pt width 4.3pt depth0pt} \kern-7pt} {} {} }}
\let \n =\nabla
\newcommand\showlabel{\addtocounter{equation}{1}\tag{\theequation}}
\begin{document}

\title[neural networks with analytic and polynomial activation functions on $\R$ and $\C$]{Qualitative neural network approximation over $\R$ and $\C$:\\
{\small Elementary proofs for analytic and polynomial activation}}

\author{Josiah Park}
\address{Josiah Park\\
Department of Mathematics\\
Texas A\&M University\\
155 Ireland Street\\
College Station, TX 77840
}
\email{j.park@tamu.edu}

\author{Stephan Wojtowytsch}
\address{Stephan Wojtowytsch\\
Department of Mathematics\\
Texas A\&M University\\
155 Ireland Street\\
College Station, TX 77840
}
\email{stephan@tamu.edu}

\date{\today}

\subjclass[2020]{
68T07, 
41A30, 
41A10, 
32A05
}
\keywords{Uniform approximation theorem, approximation of holomorphic functions, approximation of harmonic functions, holomorphic neural network, complex analytic activation function, real analytic activation function, polynomial activation function, shallow neural network, deep residual network, DenseNet}

\begin{abstract}
In this article, we prove approximation theorems in classes of deep and shallow neural networks with analytic activation functions by elementary arguments. We prove for both real and complex networks with non-polynomial activation that the closure of the class of neural networks coincides with the closure of the space of polynomials. The closure can further be characterized by the Stone-Weierstrass theorem (in the real case) and Mergelyan's theorem (in the complex case). In the real case, we further prove approximation results for networks with higher-dimensional harmonic activation and orthogonally projected linear maps.

We further show that fully connected and residual networks of large depth with polynomial activation functions can approximate any polynomial under certain width requirements. All proofs are entirely elementary.
\end{abstract}

\maketitle


\section{Introduction}

Neural networks are becoming increasingly popular tools in fields outside of the classical domain of data science. Chief among these applications are experiments in scientific computing, which often involve the solution of potentially high-dimensional partial differential equations. In many important problems -- e.g.\ computations involving quantum systems with many particles -- the output of the neural network may be complex valued. For such problems, the approximation power of complex neural networks with activation functions $\sigma:\C\to\C$ has recently come under investigation\cite{gao2021complex,trabelsi1705deep,tygert2016mathematical,virtue2017better}, and sufficient conditions were established for $\sigma$ under which (shallow, deep) neural networks can approximate {\em any} continuous function on a compact subset of $\C^d$ \cite{voigtlaender2020universal}.

It is a well-known fact that the field of complex numbers and the algebra of complex-differentiable functions behave in surprising ways when compared to the analogous objects of real analysis. For instance, if $D\subseteq \C$ is a bounded open set, the space of functions $f:\overline D\to\C$ which are complex differentiable in $D$ and continuous on $\overline D$ form a proper closed subspace of the space of continuous functions on $\overline D$. In particular there exists a continuous function $f:\overline D\to \C$ such that $\|f- p\|_{C^0(\overline D)} \geq 1$ for all polynomials $p$. This contrasts drastically with the situation in real analysis, where polynomials are dense in the space of continuous functions on a compact set as established by the Stone-Weierstrass theorem.

A similar observation can be made in classes of neural networks. For a function $\sigma:\C\to \C$ and a set of parameters
\[
\Theta:= \big\{(a_k, w_k, b_k)\in \C\times \C^d\times \C : k=1,\dots, n\},
\]
we define
\[\showlabel \label{eq shallow neural network}
f_\Theta(z) = \sum_{k=1}^n a_k \,\sigma\big(\langle w_k,z\rangle + b_k\big)
\]
where $z= (z_1,\dots,z_d) \in \C^d$ is a complex vector and $\langle w,z\rangle = \sum_{i=1}^d \overline w_i\cdot z_i$. We denote the set of all such functions with a fixed number $n$ of `neurons' by $\M_{\sigma,n}$ and $\M_{\sigma} = \bigcup_{n=1}^\infty \M_{\sigma,n}$. The following observations are immediate:

\begin{enumerate}
\item If $\sigma$ is a polynomial in of degree $m$ in one complex variable, then $f_\Theta$ is a polynomial of degree $m$ in $d$ complex variables.
\item If $\sigma$ can be represented by a convergent power series on $\C$, then $f_\Theta$ can be represented by a convergent power-series in $d$ complex variables.
\end{enumerate}
In particular, if $D\subseteq\C$ is open, there exists a continuous function $f:\overline D\to \C$ which {\em cannot} be approximated by functions of the form \eqref{eq shallow neural network}. Depending on the application, this `rigidity' may be an asset or an obstacle. Encoding a priori information about the solution of a problem (often referred to as `domain knowledge') in the neural network architecture has proved invaluable in many tasks, for example by designing convolutional neural networks to approximately respect translation invariance \cite{o2015introduction}, using periodic activation functions in signal processing \cite{alderighi1996advanced,sitzmann2020implicit,xuan2021low,xuan2021deep}, designing specialized neural networks for data in hyperbolic spaces \cite{chami2019hyperbolic,ganea2018hyperbolic,liu2019hyperbolic,peng2021hyperbolic} or directly enforcing physical symmetries in computational chemistry \cite{Zhang:2018va, Chen_2020}. By analogy, if we can show that the solution to a problem in scientific computing is given by a holomorphic function or operator, it serves us well to encode this into the design of our neural network.

The question remains: When trying to approximate a holomorphic function $f: D\subseteq \C^d\to \C$, which activation functions can be used? We present an approximation theorem which treats complex and real shallow neural networks in a unified fashion.

\begin{theorem}\label{theorem main 1}
Let $\K\in \{\R,\C\}$ and let
\[
\sigma:\K\to\K, \qquad \sigma(z) = \sum_{k=0}^\infty \alpha_kz^k
\]
be an analytic function defined by a power series with infinite radius of convergence. Consider the class of shallow neural networks of arbitrary finite width $n$ with activation $\sigma$
\[
\M_\sigma = \bigcup_{n=1}^\infty \M_{\sigma,n}, \qquad \M_{\sigma, n} = \left\{\sum_{k=1}^n a_k \,\sigma\big(\langle w_k,z\rangle + b_k\big) : (a_k, w_k, b_k) \in \K\times \K^d\times \K\right\}.
\]
Let $D\subseteq \K^d$ be an open bounded subset and denote by $C^0(\overline D)$ the space of continuous functions from $\overline D$ to $\K$.
\begin{enumerate}
\item If $\sigma$ is a polynomial of degree $m$ in $z$, then $\M_\sigma$ is the space $\P_m$ of polynomials of degree $m$ in $z$.
\item If $\sigma$ is a not a polynomial, then the closures of $\M_\sigma$  and $\P:= \bigcup_{m=0}^\infty \P_m$ in $C^0(\overline D)$ coincide.
\end{enumerate}
\end{theorem}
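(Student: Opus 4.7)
For part (1), I would show the two-sided equality $\M_\sigma = \P_m$. The inclusion $\M_\sigma \subseteq \P_m$ is immediate, since each summand is a degree-$m$ polynomial composed with an affine map. For $\P_m \subseteq \M_\sigma$, I would expand via Taylor's formula
$$
\sigma(\langle w, z\rangle + b) \;=\; \sum_{\ell=0}^m \frac{\sigma^{(\ell)}(b)}{\ell!}\,\langle w, z\rangle^\ell,
$$
and observe that, because $\alpha_m \neq 0$, the function $\sigma^{(\ell)}$ is a polynomial in $b$ of exact degree $m-\ell$. Hence one can choose $m+1$ values $b_0, \dots, b_m$ so that the matrix $[\sigma^{(\ell)}(b_j)/\ell!]_{j,\ell}$ is invertible (a staircase argument using the nonvanishing top-degree coefficient). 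Inverting realizes each power $\langle w, z\rangle^\ell$ as a finite linear combination of terms $\sigma(\langle w, z\rangle + b_j)$, so $\langle w, z\rangle^\ell \in \M_\sigma$ for $\ell = 0, \dots, m$. A polarization identity in $w$ (valid in both the real and complex case since $\overline w$ sweeps out all of $\K^d$ as $w$ does) then shows these powers span all of $\P_m$.

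For part (2), the inclusion $\M_\sigma \subseteq \overline{\P}$ is immediate: each $f_\Theta$ is entire on $\K^d$ and hence uniformly approximated on the compact set $\overline D$ by the partial sums of its multivariate power series. The heart of the proof is the reverse inclusion $\P \subseteq \overline{\M_\sigma}$. For this, I would invoke the finite-difference identity
$$
\frac{1}{k!\,\alpha_k\,h^k}\sum_{j=0}^k (-1)^{k-j}\binom{k}{j}\,\sigma(jht) \;\xrightarrow[h\to 0]{}\; t^k,
$$
valid uniformly for $t$ in any compact subset of $\K$ whenever $\alpha_k \neq 0$. Substituting $t = \langle w, z\rangle + b$ places $(\langle w, z\rangle + b)^k$ in $\overline{\M_\sigma}$ for every such $k$. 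The same Vandermonde-in-$b$ and polarization-in-$w$ steps as in part (1) (now applied within the closure, since only finite linear combinations are needed) yield $\P_k \subseteq \overline{\M_\sigma}$. Since $\sigma$ is not a polynomial, the index set $\{k : \alpha_k \neq 0\}$ is unbounded, and taking the union over $k$ gives $\P \subseteq \overline{\M_\sigma}$.

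The main obstacle is the uniform convergence in the finite-difference formula. Expanding $\sigma$'s power series and using the identity $\sum_{j=0}^k (-1)^{k-j}\binom{k}{j}\,j^\ell = k!\,S(\ell, k)$ (which vanishes for $\ell < k$ and equals $k!$ for $\ell = k$), the remainder after extracting the $\ell = k$ term is a tail series in $h^{\ell - k}$ whose coefficients are dominated by $k^\ell\,|\alpha_\ell|$. The hypothesis that $\sigma$ has infinite radius of convergence makes $\sum_\ell |\alpha_\ell|\,s^\ell$ convergent for all $s \geq 0$, which furnishes the uniform tail bound for $t$ and $h$ in any bounded set. Beyond this bookkeeping, the remaining steps are purely algebraic and I expect no further difficulty.
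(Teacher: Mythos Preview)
Your proposal is correct and follows essentially the same backbone as the paper: both approximate $t^k$ by an iterated difference quotient of $\sigma(h t)$ at $h=0$, and both handle the easy inclusion $\M_\sigma\subseteq\overline{\P}$ by truncating the power series of each network. Where you differ is in the two ``downgrading'' steps---recovering lower powers $t^\ell$ for $\ell<k$ and recovering mixed monomials from $\langle w,z\rangle^\ell$. The paper does both by further difference quotients (differentiating $z^M$ in $z$ via increments in the bias, and differentiating $(h_1z_1+\dots+h_dz_d)^{\bar m}$ in the $h_i$), so that every extraction is a limit of elements of $\M_\sigma$. You instead use exact algebraic identities: inverting the triangular/Vandermonde system $[\sigma^{(\ell)}(b_j)/\ell!]$ in the bias, and a polarization identity in $w$ to express each monomial as a finite linear combination of powers of linear forms. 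This buys you a cleaner proof of part (1), since you obtain $\P_m\subseteq\M_\sigma$ on the nose rather than only $\P_m\subseteq\overline{\M_\sigma}$ (the paper implicitly closes that gap via finite-dimensionality of $\P_m$). The paper's uniform difference-quotient approach, on the other hand, is what generalizes directly to the harmonic setting of Theorem~\ref{theorem main harmonic}, where the scaling and bias derivatives play structurally the same role.
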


If $\K = \R$, we can now recover the classical universal approximation theorem for networks with analytic activation by appealing the Stone-Weierstrass theorem \cite[Section 15.7]{MR2374633}:

\begin{theorem*}[Stone-Weierstrass Theorem]
Let $K\subset \R^d$ be compact and $\mathcal P$ the vector space of polynomials. Then $\mathcal P$ is dense in $C^0(K)$.
\end{theorem*}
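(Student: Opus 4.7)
The plan is to follow Stone's classical algebraic argument, viewing the statement as the special case $A = \P$ of the general Stone–Weierstrass criterion: any subalgebra $A \subseteq C^0(K,\R)$ that contains the constants and separates points is dense. For $A = \P$ these hypotheses are immediate, since every constant is a polynomial and the coordinate functions $x \mapsto x_i$ separate distinct points of $K \subseteq \R^d$. The core of the proof is therefore to run the general density argument for an arbitrary such $A$ and specialize.

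The technical backbone is a lattice property: I would show that the closure $\overline{\P}$ in $C^0(K)$ is stable under the pointwise operations $\max$ and $\min$. Via the identities $\max(f,g) = \tfrac{1}{2}(f+g+|f-g|)$ and $\min(f,g) = \tfrac{1}{2}(f+g-|f-g|)$, it suffices to prove $|h| \in \overline{\P}$ whenever $h \in \overline{\P}$. Because $K$ is compact, $h$ is bounded, say $|h| \le M$, so the task reduces to approximating $t \mapsto |t|$ uniformly by polynomials on $[-M,M]$. An elementary way is to write $|t| = M\sqrt{1-(1-t^2/M^2)}$ and invoke a uniform polynomial approximation of $\sqrt{1-s}$ on $[0,1]$ (for instance, via its binomial series, or via the recursion $P_{n+1}(s) = P_n(s) + \tfrac{1}{2}(s - P_n(s)^2)$ with $P_0 = 0$, which is monotone and bounded by $\sqrt{s}$).

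With the lattice property in hand, the density argument is a standard compactness double-cover. Fix $f \in C^0(K)$ and $\eps > 0$. Using that $\P$ contains constants and separates points, for any pair $x \neq y \in K$ an affine combination of a separating polynomial and a constant produces $p_{x,y} \in \P$ with $p_{x,y}(x) = f(x)$ and $p_{x,y}(y) = f(y)$. Fix $x$; by continuity $p_{x,y} < f + \eps$ on a neighborhood $U_y$ of $y$, and compactness of $K$ yields a finite subcover $U_{y_1}, \dots, U_{y_N}$, so that $q_x := \min_j p_{x,y_j}$ lies in $\overline{\P}$ and satisfies $q_x(x) = f(x)$ together with $q_x \le f + \eps$ on $K$. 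Letting $x$ vary next, $q_x > f - \eps$ on a neighborhood $V_x$; a second finite cover argument produces $p := \max_i q_{x_i} \in \overline{\P}$ with $f - \eps \le p \le f + \eps$ uniformly on $K$, which is the required approximation.

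The only place that requires genuine work is the polynomial approximation of $|t|$ on a compact interval, which is an elementary but non-tautological calculation. Every other step is a direct interplay between compactness of $K$, the algebraic structure of $\P$, and the lattice closure properties of $\overline{\P}$ established by that one analytic input.
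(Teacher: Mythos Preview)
Your argument is the standard Stone proof and is correct as written: the lattice step via uniform polynomial approximation of $|t|$ on $[-M,M]$ is sound (either the binomial series for $\sqrt{1-s}$ or the monotone recursion you mention works), and the double compactness cover is carried out properly.

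There is, however, nothing to compare it to: the paper does not prove the Stone--Weierstrass theorem. It is quoted as a classical result with a textbook reference \cite[Section 15.7]{MR2374633} and then invoked as a black box to conclude that $\overline{\P} = C^0(\overline D)$ in the real case. So your proposal supplies a proof the paper deliberately omits; it neither agrees with nor differs from a ``paper's proof,'' because none exists in the text.
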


While the result is restrictive due to the strong assumptions on $\sigma$, our proof is entirely elementary and does not require advanced techniques beyond an introductory class in (real, complex) analysis.

If $\K=\C$, we emphasize that $\sigma$ is analytic in $z$, not $(z,\bar z)$ or $(x,y)$, and that the elements of $\P_m$ are equally polynomials in the complex variable $z$. The closure of $\P$ depends on the topology of the set $D$. If $D= \{z\in \C : r < |z| < R\}$ is an annular domain and $k\in \N$, then the function $f(z) = z^{-k}$ is holomorphic on $U$, but cannot be approximated by polynomials. This follows from Cauchy's integral formula \cite[Theorem II.3.2]{MR1250380} as
\[
\int_\gamma z^{k-1}f(z)\cdot \dz = 2\pi i, \qquad \int_\gamma z^{k-1} \,p(z)\cdot \dz = 0
\]
for all polynomials $p$ and all curves $\gamma$ in $D$ which loop around the origin. If there were polynomials which could approximate $f$ uniformly, also the integrals would have to converge. 

On the positive side, in 1951 Mergelyan \cite{MR0041929} showed the following \cite[Theorem 20.5]{MR924157}: 

\begin{theorem*}[Mergelyan's Theorem]
 Let $K\subseteq \C$ be compact such that $\C\setminus K$ is connected, and $f:K\to \C$ a continuous function which is holomorphic in the interior $K^\circ$ of $K$. Then for every $\eps>0$, there exists a complex polynomial $P$ such that $\sup_{z\in K}|f(z) - p(z)| < \eps$.
 \end{theorem*}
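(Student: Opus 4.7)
The natural strategy is to reduce Mergelyan's theorem to Runge's theorem in two stages. The first stage is to approximate $f$ uniformly on $K$ by a function $\tilde f$ that is holomorphic in some open neighborhood of $K$; the second stage is then to apply Runge's theorem to $\tilde f$ to obtain a rational approximant with poles in $\C\setminus K$ and push those poles off to infinity to produce a polynomial. The hypothesis that $\C\setminus K$ is connected enters only in the pole-pushing step: each pole $w_0\in\C\setminus K$ can be moved along a path in $\C\setminus K$ toward infinity, replacing the Cauchy kernel $1/(z-w_0)$ at each intermediate point by a partial sum of its geometric expansion around a nearby point. The annular counterexample in the text shows that connectedness is indispensable, since a bounded component of $\C\setminus K$ would trap poles inside.

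The substantive stage is the first one. I would extend $f$ continuously to a function of compact support on $\C$ (by Tietze) and mollify by a smooth bump $\varphi_\delta$ of width $\delta$, yielding a smooth $F_\delta\to f$ uniformly. The obstruction is that $F_\delta$ fails to be holomorphic across $\partial K$, since mollification averages in values from outside $K^\circ$. The Cauchy--Pompeiu formula
\[
F_\delta(z)\;=\;-\frac{1}{\pi}\int_{\C}\frac{1}{w-z}\,\frac{\partial F_\delta}{\partial\bar w}(w)\,dA(w)
\]
encodes the defect precisely: $\partial F_\delta/\partial\bar w$ vanishes well inside $K^\circ$ (where $f$ is holomorphic) and well outside $K$, and is supported in a strip of width $\sim\delta$ around $\partial K$ with pointwise size controlled by $\omega_f(\delta)/\delta$, where $\omega_f$ denotes the modulus of continuity of $f$.

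To convert $F_\delta$ into a $\tilde f$ that is genuinely holomorphic in a neighborhood of $K$, I would invoke a Vitushkin-style localization. Cover the support of $\partial F_\delta/\partial\bar w$ by disks $D_j$ of radius $\sim\delta$ with bounded overlap, partition $F_\delta$ as $\sum_j G_j$ via a smooth partition of unity so that each $G_j$ has $\bar\partial$-data supported in $D_j$, and for each $j$ pick a point $w_j\in \C\setminus K$ close to $D_j$, which is possible because $D_j$ meets the connected (hence unbounded) complement of $K$. Replace each local piece by a rational correction with a single pole at $w_j$; the Cauchy transform of an $L^\infty$ datum supported in a disk of area $\sim\delta^2$ contributes at most $\omega_f(\delta)$, and summing with bounded overlap yields an approximation $\tilde f$ holomorphic off a finite set disjoint from $K$ and within $\eps/2$ of $f$ on $K$.

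The hard part will be the uniform control in this localization step: one must arrange the local $\bar\partial$-corrections so that their sum converges and remains uniformly close to $f$ on $K$, which requires both the bounded-overlap geometry of the cover and a quantitative pointwise estimate on the Cauchy transform of a bounded function supported in a small disk. Once $\tilde f$ is in hand, Runge's theorem together with the pole-pushing argument of the first paragraph yields a polynomial within $\eps$ of $f$, completing the proof.
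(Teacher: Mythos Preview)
The paper does not prove Mergelyan's theorem at all: it is quoted as a classical result with a reference to \cite[Theorem 20.5]{MR924157} and then used as a black box to identify the closure of $\P$ in $C^0(K)$ when $\K=\C$ and $\C\setminus K$ is connected. There is therefore no ``paper's own proof'' to compare your attempt against.

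That said, your outline is the standard modern route (essentially Rudin's presentation with a Vitushkin flavor): Tietze extension plus mollification to produce a smooth approximant, the Cauchy--Pompeiu formula to localize the $\bar\partial$-defect in a $\delta$-collar of $\partial K$, a bounded-overlap cover by small disks with rational corrections whose poles are placed in $\C\setminus K$, and finally Runge plus pole-pushing along the connected complement to reach a polynomial. The identification of the ``hard part'' is accurate: the delicate step is the uniform estimate for the sum of the local Cauchy transforms, which needs both the $\omega_f(\delta)/\delta$ bound on $\partial F_\delta/\partial\bar w$ and a careful choice of the correcting rational functions (in Rudin this is done via an explicit construction rather than an abstract Vitushkin localization, but the content is the same). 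One point you glossed over is guaranteeing that every small disk $D_j$ meeting $\partial K$ actually contains a point of $\C\setminus K$ at controlled distance; this is where the argument uses that $\C\setminus K$ is open and that $D_j$ is centered on $\partial K$, so $D_j\cap(\C\setminus K)\neq\emptyset$. With that caveat, your sketch is correct in spirit, but it goes well beyond anything the paper attempts --- the authors simply invoke the theorem.
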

 
 The class of `good' domains in particular includes all simply connected bounded open sets with Lipschitz boundary.
The situation in many complex variables is more complicated and not entirely understood. While multi-variate Mergelyan-type theorems \cite{MR4215751,MR3450139,cho1998mergelyan} and related results like the Oka-Weil Theorem \cite{MR0132202,MR1512987} have been obtained, there are obstructions to proving the statement in full generality. Notably, Diederich and Fornaess \cite{MR397019} constructed an example of a pseudoconvex domain $D\subseteq \C^2$ with smooth boundary and a continuous function $f:\overline D\to \C$ such that $f$ is holomorphic in $D$, but cannot be approximated uniformly by polynomials in $\overline D$. Thus even for functions of multiple complex variables which can be shown to be holomorphic on a `good' domain, there may be deep obstructions to approximation by both polynomials and holomorphic neural networks. A recent survey of holomorphic approximation can be found e.g.\ in \cite{MR4264040}.

While holomorphic functions can be thought of simultaneously as a generalization of differentiable functions and infinitely differentiable functions, the perhaps closest analogue in real analysis is the class of {\em harmonic} functions. In two dimensions, a correspondence between harmonic and holomorphic functions on $\C=\R^2$ can be constructed by taking the real part of a function, which is one-to-one up to an affine shift in the imaginary part. Like holomorphic functions, harmonic functions in any dimension form a closed proper subspace of $C^0(\overline D)$ for open bounded $D\subseteq\R^d$. For a deeper understanding of complex-analytic rigidity, we study real neural networks with harmonic activation functions. 

Since harmonic functions in one real dimension are just affine linear, the interesting case concerns activation functions $\sigma$ of two or more real variables. Furthermore, we restrict the linear representation of data to be angle-preserving projections. Namely, if $\sigma:\R^k\to \R$ is harmonic, we consider a class of functions on $\R^d$ for $d\geq k$ given by $ \M_\sigma = \bigcup_{n=1}^\infty \M_{\sigma,n}$ where
\[
\M_{\sigma,n} = \left\{\sum_{i=1}^n a_i \sigma\big(\rho_i\,P_ix + b_i\big) : \rho_i\in \R,\: b_i \in \R^k, \: P_i\in \R^{k\times d} \text{ s.t. }P_iP_i^T = I_{k\times k}\right\}.
\]
Geometrically, the linear maps $P_i$ are orthogonal projections from $\R^d$ to $\R^k$ for all $i$. It is easy to see that $f\in\M_\sigma$ is harmonic, and thus that $\M_{\sigma}$ is at most dense in the space of harmonic functions. We show the following.

\begin{theorem}\label{theorem main harmonic}
Let $\sigma:\R^k\to\R$ be a harmonic function, $d\geq k$ and $D\subseteq \R^d$ open and bounded. 

\begin{enumerate}
\item If $\sigma$ is a harmonic polynomial of degree $m$, then $\M_\sigma$ is the class of harmonic polynomials of degree $m$
\[
\mathcal{HP}_m (\R^d) = \{p\in \P_m (\R^d): \Delta \,p = 0\}.
\]
\item If $\sigma$ is not a polynomial, then the closures of $\M_\sigma$ and $\mathcal{HP} = \bigcup_{m=0}^\infty\mathcal{HP}_m$ in $C^0(\overline D)$ coincide.
\end{enumerate}
\end{theorem}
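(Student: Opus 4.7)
The inclusion of $\M_\sigma$ in the space of harmonic functions on $\R^d$ is direct from the chain rule: if $PP^T=I_{k\times k}$, then
\[
\Delta_x\,\sigma(\rho Px+b)=\rho^2\sum_{i,j}(\partial_i\partial_j\sigma)(\rho Px+b)\,(PP^T)_{ij}=\rho^2(\Delta\sigma)(\rho Px+b)=0.
\]
In case (1) every element of $\M_\sigma$ is thus a harmonic polynomial of degree at most $m$, so $\M_\sigma\subseteq\HP_m(\R^d)$; in case (2) every element is an entire harmonic function, hence uniformly approximated on $\overline D$ by its Taylor partial sums, which are harmonic polynomials, giving $\M_\sigma\subseteq\overline{\HP(\R^d)}$ in $C^0(\overline D)$.

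For the reverse direction, decompose $\sigma(y)=\sum_{j\geq 0}p_j(y)$ into homogeneous harmonic components on $\R^k$; since $\sigma$ is entire, the series converges absolutely and uniformly on every compact set. Setting $b=\rho b'$ and using homogeneity,
\[
\sigma(\rho Px+\rho b')=\sum_{j\geq 0}\rho^j\,p_j(Px+b').
\]
Fix $j_0$, choose distinct $t_1,\dots,t_{j_0+1}\in\R$, and let $c_i$ satisfy the Vandermonde conditions $\sum_i c_i t_i^l=\delta_{l,j_0}$ for $0\leq l\leq j_0$. Then
\[
\rho^{-j_0}\sum_i c_i\,\sigma\bigl(\rho t_iPx+\rho t_ib'\bigr)=p_{j_0}(Px+b')+\sum_{l>j_0}\rho^{l-j_0}\Bigl(\sum_i c_i t_i^l\Bigr)p_l(Px+b'),
\]
and the tail vanishes uniformly on $\overline D$ as $\rho\to 0$ thanks to the uniform convergence of $\sum_l p_l$ on balls containing $\{\rho t_i(Px+b'):x\in\overline D\}$. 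Hence $p_{j_0}(Px+b')\in\overline{\M_\sigma}$ whenever $p_{j_0}\not\equiv 0$, for every orthogonal projection $P$ and every $b'\in\R^k$. In case (1) the $j$-sum terminates at $m$, the identity is exact, and no limit is needed.

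A second Vandermonde step applied to the polynomial identity $p_j(Px+b')=\sum_{|\alpha|\leq j}\frac{(b')^\alpha}{\alpha!}(\partial^\alpha p_j)(Px)$, sampling $b'$ at enough points in $\R^k$, extracts $(\partial^\alpha p_j)(Px)\in\overline{\M_\sigma}$ for every $|\alpha|\leq j$ and every admissible $P$. Each $q:=\partial^\alpha p_j$ lies in $\HP_{j-|\alpha|}(\R^k)$. For any prescribed degree $\ell$ I can choose $j\geq\ell$ with $p_j\not\equiv 0$ and a multi-index $\alpha$ with $|\alpha|=j-\ell$ such that $\partial^\alpha p_j\not\equiv 0$: in case (2), infinitely many $p_j$ are nonzero (as $\sigma$ is not polynomial) and every nonzero polynomial of degree $j$ has nontrivial partial derivatives of every order up to $j$; in case (1), take $j=m$ and use $p_m\not\equiv 0$.

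Finally, the family $\{q\circ P:PP^T=I_{k\times k}\}\subseteq\HP_\ell(\R^d)$ is invariant under pre-composition with $O(d)$ (since $PQ$ is again an orthogonal projection for $Q\in O(d)$) and contains the nonzero element $q(x_1,\dots,x_k)$ coming from the projection to the first $k$ coordinates. By the classical fact that $\HP_\ell(\R^d)$ is an irreducible $O(d)$-representation, its linear span equals all of $\HP_\ell(\R^d)$. Hence $\HP_\ell(\R^d)\subseteq\overline{\M_\sigma}$ for every $\ell$ (respectively $\ell\leq m$ in case (1)), which together with the easy direction proves both parts. The main obstacle is this final irreducibility input; everything else is direct Vandermonde interpolation on the power series of $\sigma$, and I expect the representation-theoretic step to be the only place where a nonelementary fact must be invoked.
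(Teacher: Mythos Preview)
Your proof is correct and follows essentially the same route as the paper's: extract the homogeneous harmonic components $p_j$ of $\sigma$ using the scaling parameter, reduce the degree by differentiating in the bias, and then invoke the $O(d)$-irreducibility of $\HP_\ell(\R^d)$ to pass from one nonzero element to the whole space. The paper packages the first two steps as iterated difference quotients (in $\rho$, then in $b$) rather than Vandermonde interpolation, and isolates the irreducibility as a separate lemma proved via zonal spherical harmonics, but the structure is identical and you correctly identify that representation-theoretic step as the single nonelementary input.
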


So far, we only considered {\em shallow} neural networks with two layers (i.e.\ one hidden layer). While this case is historically well-studied, modern neural networks are `deep', i.e.\ they have many layers. 
To keep things simple, we focus on functions $f:\K^d\to\K$ which are represented by residual neural networks (ResNets), but comparable results can be obtained for classical fully connected feedforward networks and DenseNets (see Appendix \ref{appendix fully connected}). Residual neural networks form a function class which is comparable to feed forward networks in terms of approximation power, but with a parametrization that facilitates gradient-based optimization to find appropriate network parameters. The incremental nature of the change to the internal state in every layer alleviates the {\em vanishing and exploding gradients phenomenon}, which is the main motivation of the ResNets in \cite{he2016deep}. For this reason, truly `deep' networks typically have a form of residual structure. Continuum limits for infinitely deep neural networks have been studied in \cite{weinan2017proposal,li2017maximum} and later in \cite{chen2018neural} as `neural ODEs'.

 A ResNet can be understood as follows.

\begin{itemize}
\item Let $d_0\in \N$. For a given input $z\in \overline D$ and parameters $A^0\in \K^{d_0\times d}$, $b^0\in \K^{d_0}$, designate $z^0 = A^0 z + b^0 \in \K^{d_0}$.
\item For $\ell \in \{1, \dots, L-1\}$ and parameters $A^\ell \in \K^{d_0 \times D_\ell}, W^\ell \in \K^{D_\ell\times d_{0}}, b^\ell \in \K^{D_\ell}$ for some $D_\ell \in \N$, set 
\[\showlabel \label{eq resnet}
z^\ell = z^{\ell-1} + A^\ell\,\sigma( W^\ell z^{\ell-1} + b^\ell)
\]
where $\sigma$ is applied coordinatewise.
\item For parameters $A^L\in \K^{d_0}$, set $z^L = A^L z^{L-1}$.
\end{itemize}
As previously, we collect the weights in a single vector
\[
\Theta = \big(A^0, b^0, A^1, b^1, W^1, \dots, A^{L-1}, b^{L-1}, W^{L-1}, A^L\big)
\]
and denote $f_\Theta(z) = z^L$, where we suppressed the dependence of $z^L$ on the input $x$ and weights $\Theta$ for the sake of compact notation. It should be noted that sometimes in ResNets, the architecture is specified further by taking $D_\ell \equiv d_0$ and $A^\ell$ as the unit matrix.

Deeper neural networks have multiple parameters that govern their complexity: The depth $L$ and the vector of widths 
$d^0, D^1,\dots, D^{L-1}$. The width of the input layer $d$ and the output layer $d_L=1$ are given by the problem statement. The notation therefore becomes somewhat less compact compared to shallow networks. We denote the classes of ResNets with a fixed architecture by $\mathcal R_{L,\sigma}(d_0, D_1,\dots, D_{L-1})$. We can now present our third main result.

\begin{theorem}\label{theorem main 2}
Let $\K\in \{\R,\C\}$ and let
\[
\sigma:\K\to\K, \qquad \sigma(z) = \sum_{n=0}^\infty \alpha_nz^n
\]
be an analytic function defined by a power series with infinite radius of convergence. Let $U\subseteq \K^d$ be an open bounded subset.
\begin{enumerate}
\item Let $L\geq 2$ be fixed, $d_0\geq d+1$ and $n:= D_1 + \dots + D_{L-1}$. Then $\M_{\sigma,n} \subseteq \mathcal R_{L,\sigma}(d_0, D_1,\dots, D_{L-1})$. In particular, if $\sigma$ is not a polynomial, then the closures of 
\[
\mathcal N_1:= \bigcup_{D_1 = 1}^\infty \mathcal R_{L,\sigma}(d+1, D_1, 0, \dots, 0), \qquad \mathcal N_2 := \bigcup_{L = 1}^\infty \mathcal R_{L,\sigma}(d+1, m, m, \dots, m)
\]
and the closure of the space of polynomials $\mathcal P$ in $C^0(\overline D)$ coincide for all $m\geq 1$. 

\item If $\sigma$ is not a linear function, then the closure of 
\[
\widetilde{\mathcal N_2} := \bigcup_{L = 1}^\infty \mathcal R_{L,\sigma}(d+2,m,\dots,m)
\]
and the closure of the space of polynomials $\mathcal P$ in $C^0(\overline D)$ coincide for all $m\geq 4$.
\end{enumerate}
\end{theorem}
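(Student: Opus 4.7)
The idea is to explicitly embed a shallow network with $n = D_1 + \dots + D_{L-1}$ neurons into the ResNet class $\mathcal R_{L,\sigma}(d_0, D_1, \dots, D_{L-1})$ by using one extra coordinate as an output accumulator. I set $A^0 \in \K^{d_0 \times d}$ to be the canonical embedding into the first $d$ coordinates and $b^0 = 0$, so $z^0 = (z, 0, \dots, 0)$. After partitioning the shallow neurons into blocks of sizes $D_1, \dots, D_{L-1}$, the matrix $W^\ell$ at layer $\ell$ reads off the first $d$ coordinates of $z^{\ell-1}$ (which still equal $z$ by induction) to produce the $D_\ell$ affine arguments for that block, while $A^\ell$ is supported only in row $d+1$ and carries the shallow output weights $a_k$ of that block. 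The only coordinate that changes is $d+1$, which accumulates the partial sum of the shallow network's output, and the first $d$ coordinates are preserved throughout. After $L-1$ layers the accumulator holds $f_\Theta(z)$, and $A^L = e_{d+1}^T$ extracts it. The density claims for $\mathcal N_1$ and $\mathcal N_2$ then follow at once: both families contain $\M_{\sigma,N}$ for arbitrarily large $N$, so by Theorem \ref{theorem main 1} their closures contain $\overline{\P}$; conversely every ResNet output is entire in $z$ (built from sums, products, and compositions of $\sigma$) and hence a uniform limit of polynomials on $\overline D$ by truncating its power series, giving the reverse inclusion.

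\textbf{Plan for Part (2).} The essential extra ingredient is the ability to multiply, which requires an auxiliary coordinate (hence $d_0 = d+2$) and enough neurons per layer (hence $m \geq 4$). Since $\sigma$ is analytic and not affine, $\sigma''$ does not vanish identically, so there exists $a \in \K$ with $\sigma''(a) \neq 0$; likewise some $a' \in \K$ has $\sigma'(a') \neq 0$. The four-point central-difference identity
\begin{equation*}
\sigma(a + \eps(u+v)) + \sigma(a - \eps(u+v)) - \sigma(a + \eps(u-v)) - \sigma(a - \eps(u-v)) = 4\eps^2\sigma''(a)\, uv + O(\eps^4)
\end{equation*}
recovers $uv$ up to $O(\eps^2)$ from exactly four $\sigma$-evaluations, so a single layer with $m=4$ neurons acts as a \emph{multiplication gadget}; similarly, $\sigma(a' + \eps' y) - \sigma(a' - \eps' y) = 2\eps'\sigma'(a')\, y + O(\eps'^3)$ gives a clean linear readout of $y$ from two neurons. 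Label the $d+2$ state coordinates as input ($1,\dots,d$), output accumulator ($d+1$), and working register ($d+2$). For a polynomial $p(x) = \sum_\alpha c_\alpha x^\alpha$ I process each monomial $x_{i_1}\cdots x_{i_k}$ in a block of layers: one layer uses the gadget with $(u,v) = (x_{i_1}, x_{i_2})$ to initialize the working register; each of the next $k-2$ layers uses the gadget with $u = $ working register, $v = x_{i_j}$, shifted via $u(v-1) = uv - u$ (the shift absorbed into the bias) so that the residual update replaces the working register by $\approx uv$; a final layer splits its four neurons into two linear-readout pairs, one adding $c_\alpha \cdot (\text{working register})$ to the output accumulator, the other subtracting the working register from itself to reset it for the next monomial. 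Concatenating these blocks across all monomials of $p$ produces a ResNet in $\widetilde{\mathcal N_2}$ whose output approximates $p$ on $\overline D$, and the density argument concludes as in part (1).

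\textbf{Where the difficulty lies.} The structural work is routine; the quantitative accounting is the delicate step. Each gadget has $O(\eps^2)$ error in approximating $uv$, and these errors propagate multiplicatively: a monomial of degree $k$ requires $k-1$ iterated products, so the error in the working register can grow to $O(k\eps^2 R^{k-1})$ on a domain with $|x| \leq R$, and errors from different monomials add up in the output accumulator. Given a target accuracy $\eta$ on $\overline D$, I must first bound $R = \sup_{\overline D}|x|$ and count the multiplications in $p$, and only then choose $\eps,\eps'$ small enough to drive the cumulative error below $\eta$. A minor but necessary side point is that the purely additive residual update forces every linear operation (resets and readouts) to be realized as a linear combination of $\sigma$-activations — possible precisely because $\sigma' \not\equiv 0$. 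Once these estimates are in place, both inclusions $\overline{\widetilde{\mathcal N_2}} = \overline{\P}$ follow immediately.
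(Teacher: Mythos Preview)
Your proposal is correct and follows essentially the same approach as the paper. Part~(1) is identical: embed $z$ in the first $d$ coordinates, accumulate the shallow output in coordinate $d+1$ via residual blocks whose $A^\ell$ is supported only in that row, and conclude the reverse inclusion from analyticity of compositions. Part~(2) is also the paper's argument: the paper first treats $\sigma(z)=z^2$ exactly via $uv=\tfrac14[(u+v)^2-(u-v)^2]$ with the same $(d+2)$-coordinate layout (accumulator, working register, copy of $z$), then handles general non-affine $\sigma$ by replacing each square with the second-difference quotient at a point $z^*$ with $\sigma''(z^*)\neq 0$, observing that in the difference-of-squares the term $-2\sigma(z^*)$ cancels so that exactly four $\sigma$-evaluations suffice---precisely your four-point formula. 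Your explicit discussion of error propagation through the iterated products and of realizing linear resets/readouts through $\sigma'$ is a useful elaboration the paper leaves implicit, but the strategy is the same.
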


Roughly speaking, we only need to have the width of one layer go to infinity, while all others remain bounded, or the depth go to infinity if $d_0\geq d+1$. This is not surprising: In the first case, we approximate the target function with the first hidden layer and the remaining layers are the identity map, while in the second case, we `turn the shallow neural network on its side'. This idea is classical and will be explained in greater detail below for the reader's convenience. The restriction that $d_0\geq d+1$ is equally classical if the input dimension is $d\geq 2$, as there are obstructions to universal approximation for thinner networks \cite{johnson2018deep}. 

The most interesting of the results given in Theorem \ref{theorem main 2} is the fact that polynomial activation functions are admissible if the neural network is slightly wider than before and we vary the depth rather than the width. This is not entirely surprising since elements of $\mathcal R_{L,\sigma}(d_0, D_1,\dots, D_{L-1})$ are polynomials of degree at most $m^L$ if $\sigma$ is a polynomial of degree $m$. As $L\to\infty$, the upper bound becomes less and less restrictive. Nevertheless, it is not immediately obvious that we can in fact approximate {\em all} polynomials. We believe the result to be folklore, but are not aware of a rigorous proof in the literature.

The approximation properties of neural networks have been studied in great detail by many authors over the course of at least three decades in 
\cite{cybenko1989approximation,hornik1991approximation, leshno1993multilayer,MR1237720, makovoz1998uniform,pinkus1999approximation,maiorov1999lower,arora2016understanding,eldan2016power,yarotsky2017error, MR3856963,klusowski2018approximation,deep_barron, MR3564936,
klusowski2018approximation,petersen2018optimal, siegel2019approximation,yarotsky2019phase, chen2019efficient,schmidt2019deep,parhi2021banach,parhi2021kinds,siegel2020high,siegel2020approximation,devore2021neural,daubechies2021neural,daubechies2022nonlinear, gribonval2022approximation}
to name only a few. The main goal of this article is to provide a self-contained and elementary introduction to qualitative versions of universal and qualified approximation theorems, not to improve upon the state of the art in specific classes of functions. As such, the proofs in the remainder of the article are elementary and require little or no knowledge beyond undergraduate real and complex analysis. Nevertheless, we maintain that several results, including the unified treatment of real and complex networks, as well as the treatment of polynomial activation functions, are at most folklore to the best of our knowledge. The main novel contribution of these notes, the proof of Theorem \ref{theorem main harmonic} in Section \ref{section proof harmonic}, is the only place where deeper results are used.

\section{Proof of Theorem \ref{theorem main 1}: Shallow networks}\label{section proof main 1}

In the real case, a version of this proof goes back to \cite{mhaskar1996neural}.

\begin{proof}
{\bf Step 1.} Assume that $\sigma(z) = \sum_{j=0}^\infty \alpha_jz^j$. In this step, we show that for any $m\in\N_0$, the function $z\mapsto \alpha_mz^m$ can be approximated by a shallow neural network $f_m\in\M_{\sigma,m+1}$ of the form \eqref{eq shallow neural network} in one dimension. This holds trivially in the case $m=0$, since $\sigma(0) = \alpha_0z^0$ and $z\mapsto \sigma(0)= 1\cdot\sigma(\langle 0,z\rangle + 0)\in \M_{\sigma,1}$. For $m\geq 1$, note that
\[
\frac{d^m}{dh^m}\bigg|_{h=0} \sigma(hz) = \frac{d^m}{dh^m}\bigg|_{h=0}  \sum_{j=0}^\infty a_j(hz)^j =m!\, \alpha_mz^m
\]
since power series and their derivatives converge locally uniformly, so summation and differentiation commute \cite[Sections 6.4 and 15.2]{MR2374633}. The $m$-th derivative of $\sigma$ is the limit of iterated difference quotients
\[
\frac{d^m}{dh^m}\sigma(hz) = \lim_{\gamma\to 0} \frac{\sum_{l=0}^m(-1)^l\binom{m}l \sigma\big((h+l\gamma)z\big)}{\gamma^m}\ \text{ thus }\ \lim_{\gamma\to 0} \frac{\sum_{l=0}^m(-1)^l\binom{m}l\, \sigma\big(l\gamma z\big)}{\gamma^m} = m!\,\alpha_mz^m,
\]
where the limit holds uniformly in the set $|z|\leq R$ for any $R>0$.\footnote{ This is a corollary to either the Mean Value Theorem or the Fundamental Theorem of Calculus in real analysis, which easily applies to the complex case. Strangely, it seems to be omitted in many textbooks on real analysis. A reference in a somewhat more general setting can be found in \cite[Corollary 4.4]{MR1216137}.} Since
\[
z\mapsto \frac1{\gamma^m}\sum_{l=0}^m (-1)^l\binom{m}l\, \sigma\big((h+l\gamma)z\big) \in \M_{\sigma,m+1}
\]
by definition, the result is proved. 

{\bf Step 2.} In this step, we show that if $a_M\neq 0$, then $z\mapsto z^{m}$ can be approximated uniformly by elements of $\M_\sigma$ for any $m\leq M$. The result is trivial for $m=M$, since we can divide by $a_M$. For $m<M$, the result follows as previously by noting that
\begin{align*}
z^{m} &= \frac{m!}{M!} \,\frac{d^{M-m}}{dz^{M-m}} z^M = \frac{m!}{M!} \,\lim_{\beta\to 0} \frac{\sum_{n=0}^{M-m}(-1)^n\binom{M-m}n (z+n\beta)^M}{\beta^{M-m}} \\
	&= \frac{m!}{a_M\cdot M!} \,\lim_{\beta\to 0} \lim_{\gamma\to 0}\frac{\sum_{n=0}^{M-m}\sum_{l=0}^M(-1)^{n+l}\binom{M}l\binom{M-m}n \,\sigma(l\gamma z+n\beta)}{\beta^{M-m}\,\gamma^M},
\end{align*}
and noting that the approximating functions are elements of $\M_{\sigma, 2M-m+2}$.

{\bf Step 3.} From now on, we will consider the general case $d\geq 1$. Assume that $\sigma$ is a polynomial of degree $m$. Then clearly $\M_\sigma \subseteq\P_m$ since 
\[
\sum_{k=1}^n a_k\,\sigma(\langle w_k,z\rangle+b_k) = \sum_{k=1}^n a_k\sum_{j=0}^m \alpha_j \big(\langle w_k,z\rangle+ b_k\big)^j
\]
is a polynomial of degree $m$ in $d$ variables. On the other hand, we claim that any polynomial $p$ of degree $m$ can be approximated uniformly by elements of $\M_\sigma$. It suffices to consider the case that 
\[
p(z) = z_1^{m_1}\dots z_d^{m_d}
\]
is a monomial of degree $\overline m := m_1 + \dots + m_d\leq m$. Then $p\in \overline{\M_\sigma}$ by the same rationale as before since
\[
p(z) = \frac1{\overline m !} \,\frac{\partial^{\overline m }}{\partial_{h_1}^{m_1}\dots \partial_{h_d}^{m_d}} \big(h_1z_1+\dots + h_dz_d\big)^{\overline m }.
\]
By considering one-dimensional slices and Step 2, the function
\[
\big(h_1z_1+\dots + h_dz_d\big)^{\overline m }
\]
lies in $\overline{\M_\sigma}$ for any choice of $h_1,\dots, h_d$ and $\overline m $. As previously, also its difference quotients in $h$ can be approximated by elements of $\M_\sigma$. Since monomials can be approximated and $\M_\sigma$ is a linear class, we find that $\P_m\subseteq \overline{\M_\sigma}$.

{\bf Step 4.} If $d\geq 1$ and $\sigma$ is not a polynomial, then for every $m\in \N_0$, there exists $M\geq m$ such that $a_M\neq 0$. As in step 3, we can show that every polynomial of degree at most $m$ can be approximated uniformly by elements $f_m\in \M_{\sigma}$. Taking the union over $m\in \N$, we find that any polynomial can be approximated uniformly by $f\in \M_\sigma$. By a standard diagonal sequence argument, we conclude that $\overline{\bigcup_{m=0}^\infty \P_m}\subseteq\overline{\M_\sigma}$.

On the other hand, let $f\in \subseteq\overline{\M_\sigma}$. Then, for every $\eps>0$, there exist $n\in \N$ and $f_n\in \M_{\sigma,n}$ such that $\|f_n- f\|_{C^0(\overline D)}<\eps/2$. Since 
\[
f_n(z) =  \sum_{k=1}^n a_k \,\sigma(\langle w_i,z\rangle + b_i) = \lim_{m\to \infty} \sum_{j=1}^m \sum_{i=1}^n \alpha_ka_i \,\big(\langle w_i,z\rangle + b_i\big)^m
\]
uniformly in the set $|z|\leq R$ for any given $R>0$, we can truncate the series for $f_n$ at an index $m$ such that 
\[
\|f_{n,m}- f\|_{C^0(\overline D)} \leq \|f_{n}- f\|_{C^0(\overline D)} + \|f_{n}- f_{n,m}\|_{C^0(\overline D)} <\eps.
\]
As this can be done for any $\eps>0$, we have $f\in \overline{\bigcup_{m=0}^\infty \P_m}$.
\end{proof}

Since power series converge to their limit in $C^k$ for any $k\in \N$,\footnote{ Since their derivatives, which are also power series, converge uniformly.} we find that $\overline{\M_\sigma}\subseteq \overline{\P}$ also if the closure is taken with respect to the $C^k$-topology. Conversely, since the $m$-th difference quotient of a $C^{m+k}$-function also converges in $C^k$, and since $\sigma\in C^\infty$, we can use the proof of Theorem \ref{theorem main 1} also to see that $\overline{P} \subseteq\overline{\M_\sigma}$ in the $C^k$-topology.

\begin{corollary}\label{corollary ck approximation}
Let $\K\in \{\R,\C\}$ and let
\[
\sigma:\K\to\K, \qquad \sigma(z) = \sum_{n=0}^\infty \alpha_nz^n
\]
be an analytic function defined by a power series with infinite radius of convergence. Consider the class $\M_\sigma$ of shallow neural networks of arbitrary finite width $n$ and activation $\sigma$ as before. Let $D\subseteq \K^d$ be an open bounded subset, $k\geq 1$, and denote by $C^k(\overline D)$ the space of $k$ times differentiable functions from $D$ to $\K$ such that the derivatives of all orders extend continuously to the closure $\overline D$.
\begin{enumerate}
\item If $\sigma$ is a polynomial of degree $m$, then $\M_\sigma$ is the space $\P_m$ of polynomials of degree $m$.
\item If $\sigma$ is a not a polynomial, then the closures of $\M_\sigma$  and $\P:= \bigcup_{m=0}^\infty \P_m$ in $C^k(\overline D)$ coincide.
\end{enumerate}
\end{corollary}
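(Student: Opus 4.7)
The plan is to run the proof of Theorem \ref{theorem main 1} in the stronger topology, checking at each step that convergence in $C^0(\overline D)$ can be upgraded to convergence in $C^k(\overline D)$. The classes $\M_\sigma$ and $\P$ themselves are unchanged, so only the mode of convergence needs to be revisited.

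For the inclusion $\overline{\M_\sigma}\subseteq\overline{\P}$ in $C^k(\overline D)$, I would adapt Step 4 of the earlier proof. Any $f_\Theta(z)=\sum_{i=1}^n a_i\sigma(\langle w_i,z\rangle+b_i)$ is entire in $z$, and its Taylor truncations
\[
S_N(z):=\sum_{i=1}^n a_i\sum_{j=0}^N \alpha_j\big(\langle w_i,z\rangle+b_i\big)^j
\]
are polynomials of degree at most $N$. Since termwise differentiation of a power series yields another power series with the same (infinite) radius of convergence, the partial sums $S_N$ together with all their partial derivatives of order at most $k$ converge uniformly on $\overline D$ to $f_\Theta$ and its corresponding derivatives. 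Hence $S_N\to f_\Theta$ in $C^k(\overline D)$.

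For the reverse inclusion $\overline{\P}\subseteq\overline{\M_\sigma}$ in $C^k(\overline D)$, I would revisit Steps 1--3 of the earlier proof. The core identity there was
\[
\lim_{\gamma\to 0}\frac{1}{\gamma^m}\sum_{l=0}^m (-1)^l\binom{m}{l}\sigma(l\gamma z)=m!\,\alpha_m z^m,
\]
with uniform convergence on $\{|z|\leq R\}$. Because $\sigma\in C^\infty$, any partial derivative $\partial_z^\alpha$ with $|\alpha|\leq k$ commutes with the finite-difference operator in $\gamma$; differentiating under the sum produces the analogous identity for $\partial_z^\alpha(m!\,\alpha_m z^m)$, again with convergence uniform on compact sets. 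This is the higher-order analogue of the elementary fact cited in the footnote of Theorem \ref{theorem main 1}, and requires only that the function being differenced be of class $C^{m+k}$, which is automatic here. Iterating this observation through the nested difference quotients of Steps 2--3 (in the auxiliary variables $\beta$, $\gamma$, and $h_1,\dots,h_d$) yields $C^k$-convergence of the constructed networks to each monomial, and hence by linearity to arbitrary polynomials; a diagonal sequence argument then concludes the inclusion.

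The main obstacle is essentially bookkeeping: tracking the nested difference parameters and verifying uniformly on $\overline D$ that $\partial_z^\alpha$ commutes with each iterated finite-difference operator, so that convergence holds in $C^k$ rather than merely $C^0$. Since $\sigma$ is entire and every operation is polynomial in the spatial variables, no new analytical input is required beyond the elementary lemma that the $m$-th iterated finite difference of a $C^{m+k}$ function converges in $C^k$ to its $m$-th derivative — which itself follows from Taylor's theorem with remainder applied in the increment variable.
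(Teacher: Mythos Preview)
Your proposal is correct and follows essentially the same approach as the paper: the paper's argument (given in the paragraph preceding the corollary) is precisely that power series converge in $C^k$, giving $\overline{\M_\sigma}\subseteq\overline{\P}$, and that the $m$-th difference quotient of a $C^{m+k}$-function converges in $C^k$, so the proof of Theorem \ref{theorem main 1} carries over verbatim to yield $\overline{\P}\subseteq\overline{\M_\sigma}$. Your write-up spells out the bookkeeping with the nested parameters a bit more explicitly, but the underlying reasoning is identical.
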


In Corollary \ref{corollary ck approximation}, we replaced the $C^0$-topology by the stronger $C^k$-topology. Similarly, we could pass from a stronger topology like $C^0$ or $C^k$ to a weaker one, like $L^p$ or $W^{k,p}$ and conclude that the closures of $\P$ and $\M_\sigma$ coincide.

There exists a finite number of neurons $n$ that a shallow neural network of the form \eqref{eq shallow neural network} requires to approximate the function $z\mapsto z^m$ to {\em arbitrary} accuracy. Reaching higher precision requires increasing the magnitude of weights, but not  the number of neurons. In particular, $n$ depends only on $m$ and on which coefficients of $\sigma$ in a power series expansion are non-zero. It seems advantageous to choose activation functions in which all coefficients are non-zero (such as $\exp$), or at least such that there are no long gaps in the set of non-zero coefficients (such as $\sin, \cos, \sinh, \cosh$). If {\em all} coefficients are non-zero, then the proof of Theorem \ref{theorem main 1} also illustrates that the bias term $b_i$ in $\langle w_i, z\rangle + b_i$ is not needed to prove approximation results. This explains the density of Fourier series in the space of continuous functions, which are formally neural networks with a single hidden layer and the activation function $\sigma(z) = \exp(2\pi i z)$. In a Fourier series, all biases are set to zero.

The conditions on $\sigma$ in Theorem \ref{theorem main 1} can be weakened somewhat. To approximate $z^m$ by elements of $\M_\sigma$, we `zoomed in' suitably at the origin to utilize the power series expansion. In particular, if $f$ can be represented by a convergent non-polyomial power series in a neighbourhood of the origin on the real line, then every polynomial can be approximated arbitrarily well by elements of $\M_\sigma$. This applies in particular to real analytic activation functions like $\tanh(x) = \frac{e^x-e^{-x}}{e^x+e^{-x}}$ or the sigmoid function $\sigma(x) = \frac{1}{1+e^{-x}}$, whose power series representation does not converge globally.
Since polynomials are dense in the space of continuous functions due to the Stone-Weierstrass Theorem \cite[Section 15.7]{MR2374633}, it follows that $\overline{\M_\sigma} = \overline{\P} = C^0(\overline D)$, if the closure is taken in the uniform topology.

In the complex plane, the assumption that the radius of convergence of $\sigma$ is $\infty$ is used to prove that elements of $\M_\sigma$ can be approximated by polynomials. It is implied by the assumption that $\sigma$ is holomorphic on the entire plane $\C$, since the radius of convergence can be characterized as the distance to the closest singularity (which, in this case, is infinity) \cite[Theorem 16.2]{MR924157}. If $\K=\C$ and $\sigma$ is a rational function, then the closure of $\M_\sigma$ may be strictly larger than that of the space of polynomials (e.g.\ if $U$ is annular domain and the weights of the network are chosen such that a pole of $\sigma$ is inside the hole in $U$). In this situation `neural networks' $f:\C\to\C$ with activation $\sigma(z) = z^{-1}$ can approximate any meromorphic functions $f:\C\to \overline \C$ in $C^0(K)$, if $f$ does not have a singularity in $K$. The key observation is that any type of pole $z\mapsto z^{-m}$ can be generated by a superposition of derivatives of $\sigma$. Difference quotients approximate these derivatives uniformly away from the singularity. 

Of course, if $\sigma$ is a rational function (and not a polynomial), then there exists $z^*\in \C$ such that $\lim_{z\to z^*}|f(z)| =\infty$. When approximating a target function which is bounded on $\overline D$, the weights should be chosen such that these poles lie outside of $\overline D$. However, especially in the initial phase of training, the infinite gradients of $f$ may lead to greater numerical instability, and there is no guarantee that the domain $U$ is captured accurately by finite amounts of data. The greater expressivity therefore comes with a not so hidden cost.

In the context of machine learning, obstructions to polynomial approximation (in the complex case) would not be immediately visible: If $\{z_1,\dots, z_N\} \subset\C$ and $\{y_1,\dots, y_N\} \subset \C$ are finite data sets, then there always exists a unique polynomial $P_N$ of degree $N$ such that $P_N(z_i) = y_i$ (assuming that all $z_i$ are different). In Lagrange representation, we can write
\[
P_N(z) = \sum_{i=1}^N y_i \,\prod_{j\neq i} \frac{z- z_j}{z_i-z_j}.
\]
An exact interpolant can also be found in $\M_{\sigma, n}$ for sufficiently large $n$, which may scale linearly with $N$ (depending on which power series coefficients of $\sigma$ vanish). Even for functions which are holomorphic on the data domain $D$, it is therefore imperative to understand whether $f$ can be approximated by polynomials, as this cannot be determined from a finite data set. At most we may notice that $P_N$ does not approximate the function we expected at previously unseen data points (the test set).

As another consequence of Cauchy's integral formula, we recall the following Liouville theorem.

\begin{theorem*}[Liouville's theorem]
If 
\[
\limsup_{r\to \infty} \frac{\max\{|\sigma (z)| : z\in B_r(0)\}}{r^\alpha} < \infty
\]
for some $\alpha>0$, then $\sigma$ is a polynomial of degree at most $\lfloor \alpha\rfloor$ since the $\lfloor \alpha\rfloor+1$-th derivative of $\sigma$ vanishes.
\end{theorem*}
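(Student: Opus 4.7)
The plan is to deduce Liouville's theorem from Cauchy's integral formula for the derivatives of an entire function, which is the natural generalization of the classical Liouville argument (the case $\alpha = 0$). Since $\sigma$ is assumed to be entire, it is represented globally by its Taylor series $\sigma(z) = \sum_{k=0}^\infty \alpha_k z^k$, and the task reduces to showing that $\alpha_k = 0$ for every $k > \alpha$.

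First I would invoke Cauchy's formula for the Taylor coefficients: for any $r > 0$,
\[
\alpha_k = \frac{\sigma^{(k)}(0)}{k!} = \frac{1}{2\pi i}\int_{|z| = r} \frac{\sigma(z)}{z^{k+1}}\,\d z.
\]
Setting $M(r) := \max\{|\sigma(z)| : z \in B_r(0)\}$ and estimating the integrand by its maximum against the length $2\pi r$ of the contour yields the Cauchy estimate
\[
|\alpha_k| \le \frac{1}{2\pi}\cdot \frac{M(r)}{r^{k+1}}\cdot 2\pi r = \frac{M(r)}{r^k}.
\]

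Next I would use the growth hypothesis. By assumption there exists a constant $C < \infty$ and some $r_0$ such that $M(r) \le C r^\alpha$ for every $r \ge r_0$. Combining this with the Cauchy estimate gives $|\alpha_k| \le C\, r^{\alpha - k}$ for all $r \ge r_0$. If $k > \alpha$, i.e.\ if $k \ge \lfloor \alpha \rfloor + 1$, the right-hand side tends to $0$ as $r \to \infty$, forcing $\alpha_k = 0$. Hence the Taylor series of $\sigma$ truncates at degree at most $\lfloor \alpha \rfloor$, which means $\sigma$ is a polynomial of the asserted degree.

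There is no real obstacle here: the entire argument is a standard application of Cauchy's integral formula and the only mild subtlety is making sure the Taylor representation is available globally, which is guaranteed by $\sigma$ being entire, together with correctly translating the growth condition on $M(r)$ into decay of $r^{\alpha - k}$ as $r \to \infty$. The parenthetical remark in the statement about the $(\lfloor \alpha\rfloor + 1)$-th derivative vanishing is simply a reformulation of $\alpha_k = 0$ for all sufficiently large $k$, since termwise differentiation of the Taylor series is valid inside the disk of convergence, which is all of $\C$.
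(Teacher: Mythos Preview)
Your argument is correct and is exactly the approach the paper indicates: the paper does not write out a proof but remarks that the result follows from Cauchy's integral formula ``in a fashion virtually identical to the classical case $\alpha=0$,'' which is precisely the Cauchy-estimate computation you carry out.
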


This version of Liouville's Theorem can be proved by appealing to Cauchy's integral formula in a fashion virtually identical to the classical case $\alpha=0$, which states that every bounded holomorphic function is constant. Surprisingly, it is skipped in many standard texts on complex analysis. For a reference, see e.g.\ \cite[Exercise 7.11]{MR1975725}.

Consequently, any holomorphic activation function which generates universal approximators (in the class of holomorphic functions) fails to be Lipschitz continuous. This lack of quantitative global continuity has undesirable consequences from the perspective of statistical learning and gradient-based optimization.

In analogy to \cite{voigtlaender2020universal}, rather than the closure in the $C^0(K)$-topology for a fixed compact set $K$, we can consider the closure of $\M_\sigma$ in the topology of {\em locally uniform convergence} (or compact-open topology) on $\K^d$, i.e.\ the set of functions $f:\K^d\to\K$ such that there exists a sequence $f_n\in \M_\sigma$ such that $f_n\to f$ uniformly on every compact set $K\subset \K^d$. The relationship between the closures is more subtle in our case and discussed in Appendix \ref{appendix compact open}.

\section{Proof of Theorem \ref{theorem main harmonic}: Harmonic shallow networks}\label{section proof harmonic}

\subsection{A primer on harmonic polynomials}

Before we come to the main proof, we review some properties of harmonic functions and harmonic polynomials.

\begin{definition}
\begin{enumerate}
\item A function $f:\R^d\to\R$ is called {\em harmonic} if $\Delta f = (\partial_1^2+\dots\partial_d^2)f = 0$.
\item A function $f:\R^d\to\R$ is called {\em homogeneous of degree $j$} if $f(\lambda x) = \lambda^jf(x)$ for all $\lambda\in\R$ and $x\in \R^d$.
\end{enumerate}
\end{definition}

It is an easy exercise to see that if $p:\R^d\to\R$ is harmonic, then so is $x\mapsto p(Ox)$ for any orthogonal matrix $O$, see also \eqref{eq network is harmonic}. For polynomials, being homogeneous means that there are no lower order terms, and the degree of the polynomial and degree of homogeneity coincide. Every (harmonic) polynomial can be decomposed uniquely as $p = \sum_{j=0}^{\mathrm{deg}\,p} p_j$, where $p_j$ is a homogeneous (harmonic) polynomial of degree $j$. The fact that the $j$-homogeneous part $p_j$ of a harmonic polynomial $p$ is harmonic follows from the fact that $\Delta p_j$ is $j-2$-homogeneous, and that the terms of different homogeneity must vanish separately. We denote 
\[
\HP^d_j = \{p:\R^d\to\R : p\text{ harmonic homogeneous polynomial of degree $j$}\}.
\]
Before we come to the proof of Theorem \ref{theorem main harmonic}, we require the following auxiliary result about harmonic homogeneous polynomials and coordinate rotations. 

\begin{lemma}\label{lemma polynomial rotations}
Let $p \in \HP^d_j$ for some $d, n$ and denote
\[
V_p = \mathrm{span} \{p(Ox) : O \in O(d)\} \subseteq \HP^d_j.
\]
If $p\not\equiv 0$, then $V_p = \HP^d_j$.
\end{lemma}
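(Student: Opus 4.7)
The plan is to reduce the lemma to the irreducibility of $\HP^d_j$ as a representation of $O(d)$. Indeed, $V_p$ is manifestly $O(d)$-invariant, since for any $\sum_i c_i\, p(O_i x) \in V_p$ and any $O' \in O(d)$, the rotated polynomial $\sum_i c_i\, p(O_i O' x)$ again lies in $V_p$; and $V_p \neq \{0\}$ because $p \not\equiv 0$. So if every nonzero $O(d)$-invariant subspace of $\HP^d_j$ equals the whole space, we are done. To exploit orthogonality I would equip $\HP^d_j$ with the $L^2$ inner product $\langle q_1, q_2 \rangle = \int_{S^{d-1}} q_1 q_2 \, d\sigma$, which is $O(d)$-invariant by the rotational invariance of $\sigma$; consequently the orthogonal complement of any $O(d)$-invariant subspace is itself $O(d)$-invariant.

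The heart of the proof, and the main obstacle, is the \emph{zonal harmonic lemma}: for each $x_0 \in S^{d-1}$, the space of elements of $\HP^d_j$ fixed by the stabilizer $H_{x_0} := \{O \in O(d) : O x_0 = x_0\}$ is one-dimensional and contains an element $Z^{x_0}_j$ with $Z^{x_0}_j(x_0) \neq 0$. Taking $x_0 = e_d$, any $H_{e_d}$-invariant polynomial depends only on $x_d$ and $r^2 := x_1^2 + \cdots + x_{d-1}^2$, and homogeneity of degree $j$ forces it into the shape $\sum_{k=0}^{\lfloor j/2 \rfloor} c_k\, x_d^{j-2k} r^{2k}$. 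A direct calculation of $\Delta$ acting on each monomial, collected by coefficient, produces a two-term linear recursion $2(k+1)(d + 2k - 1)\, c_{k+1} + (j - 2k)(j - 2k - 1)\, c_k = 0$ which determines all $c_k$ from $c_0$ (assuming $d \geq 2$; the case $d = 1$ is trivial since $\HP^1_j$ is at most one-dimensional). Normalizing $c_0 = 1$ gives a nonzero $Z^{e_d}_j$ with $Z^{e_d}_j(e_d) = 1$; conjugating by $O(d)$ transports this to any $x_0 \in S^{d-1}$.

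To finish, let $V \subseteq \HP^d_j$ be a nonzero $O(d)$-invariant subspace. I would pick a nonzero $q_0 \in V$, choose $x_1 \in S^{d-1}$ with $q_0(x_1) \neq 0$, and then $O \in O(d)$ sending $x_0$ to $x_1$; the polynomial $q(x) := q_0(O x)$ lies in $V$ and satisfies $q(x_0) \neq 0$. Averaging $q$ over the compact group $H_{x_0}$ (equivalently, applying the Reynolds projection onto $H_{x_0}$-invariants, which preserves the finite-dimensional $H_{x_0}$-invariant subspace $V$) yields $\bar q \in V$ that is $H_{x_0}$-invariant and satisfies $\bar q(x_0) = q(x_0) \neq 0$. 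By the zonal harmonic lemma, $\bar q$ is a nonzero scalar multiple of $Z^{x_0}_j$, so $Z^{x_0}_j \in V$. If $V \subsetneq \HP^d_j$ then $V^\perp$ would be another nonzero $O(d)$-invariant subspace and the same argument would place $Z^{x_0}_j$ inside $V^\perp$, forcing $Z^{x_0}_j \in V \cap V^\perp = \{0\}$ and a contradiction.
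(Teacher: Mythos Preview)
Your proof is correct and shares its core with the paper's: both obtain the zonal harmonic $Z^{x_0}_j$ inside the invariant subspace by averaging over the stabilizer $H_{x_0}$, and both rely on the one-dimensionality of the $H_{x_0}$-fixed vectors in $\HP^d_j$. The differences are in execution. First, you \emph{prove} the zonal harmonic lemma directly via the two-term recursion on the coefficients of $\sum_k c_k\,x_d^{\,j-2k}r^{2k}$, whereas the paper simply invokes uniqueness of zonal spherical harmonics as a standard fact. Second, your endgame is the orthogonal-complement irreducibility trick (if $V\subsetneq\HP^d_j$ then $V^\perp$ would also contain $Z^{x_0}_j$), while the paper finishes by choosing a fundamental system $\{x_i\}$ avoiding the zeros of $p$ and invoking an external theorem \cite[Thm.~1.3.3]{MR3060033} that the associated zonal harmonics form a basis. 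Your route is more self-contained and closer in spirit to a representation-theoretic argument; the paper's route leans on the existing machinery of spherical harmonics (Gegenbauer polynomials, positive-definiteness, fundamental systems) and thereby situates the lemma within that classical framework.
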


In particular, Lemma \ref{lemma polynomial rotations} illustrates that there is no substantial difference between harmonic polynomials in many variables and few variables, since e.g.\ $x_1^2-x_2^2$ and its rotations can be used to generate {\em any} homogeneous harmonic polynomial of degree $2$ on a high-dimensional space.

We prove Lemma \ref{lemma polynomial rotations} in Appendix \ref{appendix polynomial rotations}. 

\subsection{Approximation by harmonic neural networks}
If $\sigma:\R^k\to\R$ is harmonic, then any $f\in \M_\sigma$ is harmonic for any $d\geq k$ since the Laplace operator $\Delta$ is linear and
\begin{align*}
\Delta a_i \sigma\big(\rho_i\,P_ix + b_i\big) &= a_i \sum_{j=1}^d \partial_{x_j}\partial_{x_j}\sigma\big(\rho_i\,P_ix + b_i\big)\\
	&= a_i\rho_i \sum_{j=1}^d \partial_{x_j}\left(\sum_{l=1}^kP_{i; lj} (\partial_{y_l}\sigma)\big(\rho_i\,P_ix + b_i\big)\right)\\
	&= a_i\rho_i^2 \sum_{j=1}^d \sum_{l,m=1}^kP_{i; lj}P_{i; mj} (\partial_{y_l}\partial_{y_m}\sigma)\big(\rho_i\,P_ix + b_i\big)\\
	&= a_i\rho_i^2\sum_{l,m=1}^k\left(\sum_{j=1}^dP_{i; lj}P_{i; mj}\right) (\partial_{y_l}\partial_{y_m}\sigma)\big(\rho_i\,P_ix + b_i\big)\\
	&= a_i\rho_i^2\sum_{l,m=1}^k (P_iP_i^T)_{lm} (\partial_{y_l}\partial_{y_m}\sigma)\big(\rho_i\,P_ix + b_i\big)\\
	&= a_i\rho_i^2\sum_{l,m=1}^k \delta_{lm} (\partial_{y_l}\partial_{y_m}\sigma)\big(\rho_i\,P_ix + b_i\big)\\
	&= a_i\rho_i^2 (\Delta \sigma) \big(\rho_iP_ix+b_i) = 0\showlabel\label{eq network is harmonic}.
\end{align*}
 This implies that element of $\M_\sigma$ cannot approximate any function which is not harmonic due to \cite[Theorem 2.8]{MR1814364} in close analogy to the observations for holomorphic functions. 

\begin{proof}[Proof of Theorem \ref{theorem main harmonic}]
{\bf Step 1.}  Let $q\in \HP_m^d$ and $\sigma:\R^k\to \R$ harmonic such that $\sigma$ is not a polynomial of degree at most $m-1$. In this step, we show that $q\in \overline{\M_\sigma}$. 

Note that for any $t\in\R$, the function $x\in \R^k \mapsto \sigma (tx)$ is harmonic by the same argument as \eqref{eq network is harmonic}. Since sums of harmonic functions are harmonic, this means that
\[
x\mapsto \frac{\sigma(tx) - \sigma(0)}t
\]
is harmonic for every $t\neq 0$, and since the uniform limit of harmonic functions is harmonic, also $x\mapsto \frac{d}{dt}\big|_{t=0}\sigma(tx)$ is harmonic. The same is true for iterated difference quotients, and therefore higher order derivatives. Since $\sigma$ is harmonic, it is analytic and can be written as 
\[
\sigma(x) = \sum_{j=0}^\infty p_j(x)
\]
in a neighbourhood of the origin \cite[Section 2.2, Theorem 10]{MR2597943}, where $p_j$ is a homogeneous polynomial of degree $j$. As for holomorphic functions, we find that
\[
\frac{d}{dt}\bigg|_{t=0}\sigma(tx) = \sum_{j=0}^\infty \frac{d}{dt}\bigg|_{t=0} p_j(tx) =\sum_{j=0}^\infty \frac{d}{dt}\bigg|_{t=0} t^j\,p_j(x)  = j!\,p_j(x).
\]
In particular, we see that $p_j$ is harmonic for all $j\in\N$. Since $\sigma$ is not a polynomial of degree at most $m-1$, there exists some $j\geq m$ such that $p_j\not \equiv 0$. By the preceding analysis and the definition of $\M_\sigma$, we find that 
\[
x\mapsto p_j(P x) \in \overline \M_\sigma
\]
for any $P\in O(d,k)$. Thus, fixing $P:\R^d\to \R^k$ as $x\mapsto (x_1,\dots, x_k)$ and setting $\tilde p_j(x) = p_j(Px)$, we find that $x\mapsto \tilde p_j  (Ox)\in \overline\M_\sigma$ for all $O\in O(d)$. 

If $f$ and $g$ can be approximated to arbitrary accuracy by elements of $\M_\sigma$, the same is true for $f+g$ and $\lambda f$ with $\lambda\in \R$, so $\overline{\M_\sigma}$ is a linear space. In the terminology of Lemma \ref{lemma polynomial rotations}, this means that $V_{\tilde p_j}\subseteq\overline{\M_\sigma}$ and thus $\HP^d_j\subseteq \overline{\M_\sigma}$. If $j=m$, then $q\in \overline{\M_\sigma}$ and the proof is concluded. 

If $j>m\geq 0$, we observe that there exists $1\leq \ell\leq k$ such that $\partial_{x^\ell}p(x)$ is not the zero polynomial, since a non-trivial polynomial has a non-trivial gradient. By \cite[Section 2.2]{MR1251736}, we have $\Delta\partial_{x^\ell}p = \partial_{x^\ell}\Delta p =0$. In particular $\partial_{x^\ell}p(x)$ is a homogeneous harmonic polynomial of degree $j-1$ and
\[
\partial_{x^\ell}p(x) = \lim_{h\to 0} \frac{p(x+he_\ell) - p(x)}h \in \overline{\M_\sigma}
\]
by almost the same construction as before. The main difference is that earlier we took the derivative in the scaling factor $\rho$, where now we take the derivative in the bias $b$. After $j-m$ steps, we find $\tilde p\in \HP^d_m \cap \overline{\M_\sigma}$, and the proof can be concluded as before.

{\bf Step 2.} We have seen that, if $\sigma$ is not a harmonic polynomial of degree at most $m-1$, then every harmonic homogeneous polynomial of degree at most $m$ can be approximated by elements of $\M_\sigma$ in the uniform topology on $\overline D$. By linearity, this is also true for every harmonic polynomial of degree at most $m$. We now distinguish two cases:
\begin{enumerate}
\item $\sigma$ is a polynomial of degree $m$. Then $\M_\sigma$ is contained in the space of harmonic polynomials of degree at most $m$, so 
$\M_\sigma = \overline{\M_\sigma} = \oplus_{j=0}^m \HP_j^d$ is the space of harmonic polynomials of degree at most $m$.
\item $\sigma$ is not a polynomial. Then $\oplus_{j=0}^\infty \HP_j^d\subseteq \overline{\M_\sigma}$, where the $\oplus$ denotes the direct sum in which at most finitely many terms are non-zero. By density $\overline{\oplus_{j=0}^\infty \HP_j^d}\subseteq \overline{\M_\sigma}$.
\end{enumerate}

{\bf Step 3.} We now show that $\overline{\M_\sigma}\subseteq\overline{\HP}$. Since $\sigma$ is a harmonic function on the entire space $\R^k$, it can be represented by a globally convergent power series in many variables
\[
\sigma(x) = \sum_{j=0}^\infty p_j(x)
\]
due to \cite[Section 2.2.e]{MR2597943}, where $p_j$ is a harmonic homogeneous polynomial of degree $j$, much like in Step 1. In particular, for any $P \in O(d,k)$ and $\rho, b\in\R$, we see that $x\mapsto \sigma(\rho Px+b)$ is an analytic function. Since power series converge locally uniformly, we can truncate the series at a finite index depending on $\rho, b$ and $\overline D$ to obtain a harmonic polynomial in $d$ variables which uniformly approximates $\sigma(\rho Px+b)$. By linearity, any element in $\M_\sigma$ can be approximated uniformly by harmonic polynomials. By density and selection of a diagonal sequence, this extends to any element in the closure $\overline{\M_\sigma}$.
\end{proof}

\begin{remark}
The restrictive class of linear maps is crucial in this result, since the harmonicity is only preserved due to the orthogonality constraint. If $\widetilde M_{\sigma,n}$ is the class of all maps
\[
f(x) = \sum_{i=1}^n a_i\,\sigma\big(W_ix+b_i\big)
\]
for general linear maps $W_i:\R^n\to \R^k$ and $\widetilde \M_\sigma = \bigcup_{n=1}^\infty \widetilde\M_{\sigma,n}$, then 
\begin{enumerate}
\item If $\sigma$ is a polyomial of degree at most $m$, then $\M_\sigma= \P_m$ is the space of polynomials of degree $m$.
\item If $\sigma$ is not a polynomial, then $\M_\sigma$ is dense in the space of continuous functions.
\end{enumerate}
This holds even for harmonic activation functions. An easy way to see this is to fix a direction $v\in \R^k$ such that $t\mapsto \sigma(tv)$ is not a polynomial (not a polynomial of low degree) and consider the uniform approximation theorem in one variable, e.g. Theorem \ref{theorem main 1} for maps $W_i$ which project to the line $tv$. To see that such a direction exists, observe the following:
\[
\sigma(x) = \sum_{j=0}^\infty p_j(x)
\]
where $p_j$ is a homogeneous harmonic polynomial of degree $j$. For all $j$, the following dichotomy holds: Either $p_j\equiv 0$, or the set
\[
N_j = \{v\in \R^d : p_j(v) = 0\}
\]
 has Lebesgue measure $0$.\footnote{ If $n=1$, this is the fundamental theorem of calculus. In the general case, this can be proved by Fubini's theorem and induction on $n$. See e.g.\ \cite{ivanovoverflow} for an alternative elegant proof.} In particular, the set
 \[
 Y = \bigcap_{N_j\not\equiv \R^d} \big(\R^d\setminus N_j\big)
 \]
has full measure. For any $v\in Y$, $\sigma_v(t) = \sigma(tv)$ is an analytic function which is a polynomial of the same degree as $\sigma$ if $\sigma$ is a polynomial, and not a polynomial if $\sigma$ is not a polynomial.

Analogously, the restriction to complex linear maps plays a major role in the context of \cite{voigtlaender2020universal}.
\end{remark}

\begin{remark}\label{remark approximation harmonic polynomials}
As in the complex case, the question whether all harmonic functions in $D$ which extend continuously to $\overline D$ can be approximated by harmonic polynomials depends on the topology of $D$. For a general compact set $K\subseteq\R^d$, the following are equivalent \cite[Theorem 1.3]{MR1342298}:
\begin{enumerate}
\item Every continuous function on $K$ which is harmonic in the interior $K^\circ$ of $K$ can be approximated uniformly by harmonic polynomials.
\item $\R^d\setminus K$ and $\R^d\setminus K^\circ$ are {\em thin} at the same points of $K$.
\end{enumerate}
For a review of thin sets, see e.g.\ \cite[Section 0]{MR1342298} or \cite[Section 5.6]{MR3308615}. In particular, if $D$ is a bounded open set with $C^2$-boundary and $d\geq 3$, every harmonic function on $D$ which extends continuously to $\overline D$ can be approximated uniformly by harmonic polynomials.
\end{remark}

\section{Proof of Theorem \ref{theorem main 2}: Deep residual networks}\label{section proof main 2}

In this section, we establish the approximation properties of deep residual neural networks. The proof of the first claim in Theorem \ref{theorem main 2} is a classical technique of `turning a neural network on its side'.

\begin{proof}
{\bf First claim.} Let 
\[
f_\Theta \in \M_{\sigma,n}, \qquad f_\Theta(x) = \sum_{i=1}^n a_i\,\sigma(\langle w_i,x\rangle + b_i) = \sum_{i=1}^n a_i\,\sigma\left(\sum_{j=1}^d \overline w_{i,j}x_j + b_i\right)
\] 
be a shallow neural network. We represent $f_\Theta$ by a residual neural network as
\begin{align*}
z \mapsto& \xrightarrow{linear} \begin{pmatrix} x\\ 0\end{pmatrix} \xrightarrow{residual} \begin{pmatrix} x + \sigma(0)\\ 0 + \sum_{i=1}^{D_1}a_i\,\sigma(\langle w_i,x\rangle + b_i)\end{pmatrix} = \begin{pmatrix} x \\ \sum_{i=1}^{D_1}a_i\,\sigma(\langle w_i,x\rangle + b_i)\end{pmatrix}\\
	&\xrightarrow{residual} \begin{pmatrix} x \\ \sum_{i=1}^{D_1+D_2}a_i\,\sigma(\langle w_i,x\rangle + b_i)\end{pmatrix} \xrightarrow{residual}\:\dots\: \xrightarrow{residual}\begin{pmatrix} x \\ \sum_{i=1}^{n}a_i\,\sigma(\langle w_i,x\rangle + b_i)\end{pmatrix}\\
		&\xrightarrow{linear} \sum_{i=1}^{D_1}a_i\,\sigma(\langle w_i,x\rangle + b_i)
\end{align*}
with weights 
\[
A^0 = \begin{pmatrix} 1&0&\dots &0\\ 0&1&&\\ \vdots&&\ddots&\vdots\\0&\dots&\dots&1\\0&\dots &\dots &0 \end{pmatrix},\quad b^0 = \begin{pmatrix}0\\\vdots \\0\end{pmatrix}, \quad A^L = \begin{pmatrix} 0&\dots&0&1\end{pmatrix}
\]
and, for $1\leq \ell\leq L-1$, 
\[
A^\ell = \begin{pmatrix}0 &\dots& 0\\ \vdots &\ddots&\vdots\\ 0&\dots &0\\ a_{N_\ell+1} &\dots &a_{N_\ell+ D_\ell}\end{pmatrix}\in \K^{(d+1)\times D_\ell}, \qquad W^\ell = \begin{pmatrix} \overline w_{N_\ell+1,1} &\dots &\overline w_{N_\ell+1,d}\\ \vdots &\ddots &\vdots \\ \overline w_{N_\ell+D_\ell, 1} &\dots &\overline w_{N_\ell+ D_\ell, d}\\ 0 &\dots&0\end{pmatrix}\in \K^{D_\ell\times (d+1)}
\]
where $N_\ell = \sum_{i=1}^{\ell-1} D_i$. The biases are chosen accordingly as $b^\ell = (b_{N_\ell+1}, \dots, b_{N_\ell+D_\ell})$. Of course if $\K=\R$, the complex conjugation has no effect. 

In particular $\overline{\P} = \overline{\M_\sigma}\subseteq \overline{\mathcal N_1}$ since $\M_\sigma\subseteq \mathcal N_1$, and similarly for $\mathcal N_2$. On the other hand, $f\in \mathcal R(d_0, D_1,\dots, D_{L-1})$ is a composition of analytic (vector-valued) functions with globally convergent power series. Consequently, also $f$ can be represented by a globally convergent power series \cite[Section 14.2]{MR2374633}, and by truncating $\mathcal R_{L,\sigma}(d_0, D_1,\dots, D_{L-1})\subseteq\overline\P$ for any choice of architecture $d_0, D_1,\dots, D_{L-1}$. Therefore in particular $\overline {\mathcal N_1}, \,\overline{\mathcal N_2} \subseteq \overline{\P}$.

{\bf Second claim.} If $\sigma$ is not a polynomial, the second claim follows from the first. In the case of polynomial activation, the approximation properties of deep networks are strictly greater than those of networks of fixed depths. We provide a direct proof for the second claim which does not use the first. 

First, consider the case that that $\sigma(z) = z^2$. Again, we construct the ResNet such that the vector $z= (z_1,\dots, z_d)$ is available at all layers. 
We recall that 
\[\showlabel \label{eq products as squares}
z_iz_j = \frac{\sigma(z_i+z_j) - \sigma(z_i-z_j)}4\quad \text{and}\quad z_i = \frac{\sigma(z_i+1) - \sigma(z_i-1)}4.
\]
Let $P(z) = \sum_{j_1+ \dots + j_d\leq m\\, j_i\in \N_0} a_{j_1\dots j_d} z_1^{j_1}\dots z_d^{j_d}$. If $d_0 \geq d+2$ and $D_\ell\geq 2$, we construct the residual representation
\begin{align*}
z &\xrightarrow{linear} \begin{pmatrix} a_{0,\dots,0} + \sum_{j=1}^d a_{0,\dots, 1,\dots,0}z_j\\0\\z\end{pmatrix} 
	\xrightarrow{residual} \begin{pmatrix} a_{0,\dots,0} + \sum_{j=1}^d a_{0,\dots, 1,\dots,0}z_j + a_{1,1,0,\dots,0}z_1^2\\0\\z\end{pmatrix} \\
	&\xrightarrow{residual} \begin{pmatrix} a_{0,\dots,0} + \sum_{j=1}^d a_{0,\dots, 1,\dots,0}z_j + a_{2,0,\dots,0}z_1^2+ a_{1,1,0,\dots,0}z_1z_2\\0\\z\end{pmatrix}\\
	&\xrightarrow{residual}\:\dots\:\xrightarrow{residual} \begin{pmatrix} a_{0,\dots,0} + \sum_{j=1}^d a_{0,\dots, 1,\dots,0}z_j + \sum_{i,j=1}^da_{0,\dots,1,\dots,1,\dots,0}z_iz_j\\0\\z\end{pmatrix}\\
	&\xrightarrow{residual} \begin{pmatrix} a_{0,\dots,0} + \sum_{j=1}^d a_{0,\dots, 1,\dots,0}z_j + \sum_{i,j=1}^da_{0,\dots,1,\dots,1,\dots,0}z_iz_j\\z_1^2\\z\end{pmatrix}\\
	&\xrightarrow{residual} \begin{pmatrix} a_{0,\dots,0} + \sum_{j=1}^d a_{0,\dots, 1,\dots,0}z_j + \sum_{i,j=1}^da_{0,\dots,1,\dots,1,\dots,0}z_iz_j\\z_1^3\\z\end{pmatrix}\\
	&\xrightarrow{residual} \begin{pmatrix} a_{0,\dots,0} + \sum_{j=1}^d a_{0,\dots, 1,\dots,0}z_j + \sum_{i,j=1}^da_{0,\dots,1,\dots,1,\dots,0}z_iz_j+ a_{3,0,\dots,0}z_1^3\\0\\z\end{pmatrix}\\
	&\xrightarrow{residual}\:\dots\:\begin{pmatrix} P(z)\\ 0 \\ z\end{pmatrix} \xrightarrow{linear}P(z).
\end{align*}
The additional zero component is needed to `build up' higher powers of $z$ by using $\sigma$ and 
\[
z_1^{j_1+1}\dots z_d^{j_d}= z_1^{j_1}\dots z_d^{j_d} \cdot z_1
\]
together with \eqref{eq products as squares}, before adding them to the polynomial which is assembled over many layers in the first component. We note that the requirement $D_\ell\geq 2$ was needed in order to execute multiplication -- if multiplication were stretched over multiple layers, an additional `storage' space would be required. Wider networks with wider residual blocks could fit multiple terms of the polynomial at the same time. 

Now consider the case that $\sigma$ is a general analytic function which is not affine-linear. In this case, there exists $z^*\in \K$ such that $\sigma''(z)\neq 0$. As in the proof of Theorem \ref{theorem main 1}, we can approximate $z\mapsto z^2$ to arbitrary accuracy by 
\[
z^2 = \lim_{h\to 0} \frac{\sigma(z^*+hz) - 2\sigma(z^*) + \sigma(z^*-hz)} {h^2\,\sigma''(z^*)},
\]
so we note that we can approximate squares to arbitrary accuracy using three evaluations of $\sigma$ and products to arbitrary accuracy using six evaluations of $\sigma$. However, as we only require squares or the differences of squares for which middle function value $\sigma(z^*)$ cancels out, only $m=4$ neurons are needed rather than $m=6$.

We thus see that $\P \subseteq \overline{\bigcup_{L=1}^\infty \mathcal{R}_{L,\sigma}(d+2, 4,\dots,4)}$. The opposite inclusion holds since every element of the space on the right is analytic.
\end{proof}

While every shallow neural network can be expressed as a deep residual network, the number of parameters required to represent the network increases roughly twofold, since we list $D_\ell\cdot d$ zeros explicitly in the weight matrices $A^\ell, W^\ell$ in every step which are implicit in the shallow neural network. While a shallow neural network is described by $(d+2)n$ parameters, the same function represented as a deep residual network has
\[
(d+1) + \sum_{\ell=1}^{L-1} \big(2 D_\ell(d+1) + D_\ell\big) + d(d+1) + (d+1) = 2(n+1)(d+1) +n
\]
parameters. On the other hand, deep residual networks have a much larger expressive power for this number of parameters, since a single input can pass through multiple non-linear activations $\sigma$.

A similar argument can be made for traditional fully-connected feed forward networks and DenseNets. More details can be found in Appendix \ref{appendix fully connected}.
In the real case, the final proof remains valid under the weaker assumption that $\sigma$ is not affine linear and $\sigma \in C^2$.

\section{Conclusion and Further Directions}

We showed that neural networks with (real or complex) analytic activation function can approximate any function which can be approximated by polynomials, and vice versa. While we focused on the $C^0$-topology in our presentation, the result holds in any $C^k$- or $L^p$- topology. The proofs are simple and only require the approximation of derivatives by difference quotients. In the real case, we reprove the universal approximation theorem by elementary means, reducing it to the Stone-Weierstrass theorem. In the complex case, the situation is more complicated, and results may depend on the topology of the domain of approximation. 

If a function is known to be holomorphic in $d\geq 2$ complex variables, or the domain of approximation does not satisfy the hypotheses of Mergelyan's theorem for $d=1$, then also approximation by neural networks is generally impossible. Similar results are obtained for shallow real `harmonic' neural networks, which utilize orthogonal projections of the data onto a lower-dimensional (but not one-dimensional) linear space.

Finally, we showed that polynomial activation functions are admissible from the perspective of approximation theory for residual networks if the network has a certain minimal width and the depth may be taken arbitrarily large. 

The results as presented above are unsatisfying in several ways.

\begin{enumerate}
\item The results presented here are purely qualitative. No rates of approximation are established under stronger assumptions on the target function in terms of e.g.\ the number of parameters. Some quantitative rates can be found in \cite{https://doi.org/10.48550/arxiv.1810.08033, yarotsky2017error} for target functions which lie either in Sobolev or H\"older spaces, and for approximation in different topologies. However, we emphasize that {\em any} function class which has desirable properties from the perspective of statistical learning theory faces the curse of dimensionality when approximating some function in a too general function class, even in a weak topology \cite{MR4198759}.

\item The proofs above involved the approximation of derivatives by difference quotients. Consequently, the coefficients are large and the representation of the neural network depends critically on cancellation between possibly very large terms. For gradient-based optimizers, such subtle coefficients are hard to find.

\item The proofs cannot be easily modified to include the possibly most popular activation function in practice, the rectified linear united (ReLU) activation $\sigma(x) = \max\{x,0\}$. Analogous results in the case of ReLU activation are presented in Appendix \ref{appendix relu}.
\end{enumerate}

From a negative perspective, our results can be interpreted as a statement that neural networks can approximate exactly the same functions as polynomials. In the real case, this is not surprising since polynomials are dense in the space of continuous functions. Nevertheless, approximation by polynomials of high degree performs poorly even in {\em one-dimensional} interpolation problems, as interpolation with poorly chosen data points leads to high amplitude oscillations at the domain boundary known as the {\em Runge phenomenon} \cite[Chapter 13]{MR3012510}. Universal approximation theorems are therefore unable to explain the superiority of neural networks over other parametrized function classes.

Approximation results for neural networks with bounded weights in a suitable sense can be obtained in suitable model classes adapted to neural networks \cite{MR1237720, bach2017breaking, E:2018abpub, MR4375792, parhi2021banach}, in which it can also be demonstrated that neural networks significantly outperform any linear method in spaces of high dimension \cite[Theorem 6]{MR1237720}.

\section{Acknowledgements}

The authors would like to thank Ron DeVore, Guergana Petrova and Peter Binev for inspiring conversations. JP would also like to acknowledge helpful conversations with Dmitriy Bilyk, Alexey Glazyrin, and Oleksandr Vlasiuk at the SIAM TX-LA Conference.

\appendix

\section{Proof of Lemma \ref{lemma polynomial rotations}} \label{appendix polynomial rotations}

Let $\mathcal{HP}^{d}_n$ be the space of harmonic homogeneous polynomials of degree $n$ in $d$ variables. Harmonic homogeneous polynomials are in one-to-one correspondence with their restriction on the sphere as described below. When we restrict harmonic polynomials to the sphere we obtain what are called {\it spherical harmonics}, which are eigenfunctions of the Laplace-Beltrami operator on the (surface of the) sphere. Similarly there is a unique way to extend a spherical harmonic to a harmonic polynomial on Euclidean space $\mathbb{R}^d$. This and much of what is below are standard facts from harmonic analysis on spheres \cite{MR3060033}.

To prove our lemma we need a result on {\it zonal spherical harmonics}, spherical harmonics which are invariant under rotation with respect to a fixed axis. These functions are unique (up to a multiplicative constant) and can be represented on the sphere as $Z^y_n(x)=C_{n}^\lambda(\langle x,y \rangle)$ where $\lambda=(d-2)/2$ and $C_{n}^{\lambda}(t)$ is the degree $n$ Gegenbauer polynomial (defined below). A well-known observation about these functions from interpolation on the sphere states that zonal spherical harmonics form a basis for the spherical harmonics, and thus by the observation above, the restriction of $\mathcal{HP}^{d}_n$ to the sphere (Theorem \ref{theorem-harm}). 

There are a few additional objects we need to define before proving Lemma \ref{lemma polynomial rotations}. {\it Gegenbauer polynomials} are a class of orthogonal polynomials which play a special role in harmonic analysis on spheres. These functions, denoted $C_{n}^{\lambda}(t)$ define a reproducing kernel in the space $L^2(\mathbb{S}^{d-1})$. 

Before we introduce that relation (known as the {\it addition formula}) we note that spherical harmonics, as eigenfunctions of the Laplace-Beltrami operator, form a basis for $L^2(\mathbb{S}^{d-1})$ and we may index a basis of eigenfunctions in terms of the $n$-th eigenvalue of this operator (its eigenvalues are non-negative and increasing). For each eigenvalue the space of spherical harmonics corresponding to this $n$-th eigenvalue is a linear space $V_n$ of dimension $\dim V_n$ which has a basis $\{Y_{n,1},...,Y_{n,\dim V_n}\}$. The {\it addition formula} tells us that $$C_{n}^{\lambda}(\langle x, y \rangle)=\frac{1}{\dim V_n} \sum_{k=1}^{\dim V_n} Y_{n,k}(x)\overline{Y_{n,k}(y)}.$$

The addition formula shows immediately that the Gegenbauer polynomials are what we call positive definite functions on $\mathbb{S}^{d-1}$  meaning 

$$ \sum_{1\leq i,j \leq k} c_{i}\overline{c_{j}}C_{n}^{\lambda}(\langle x_i,x_j \rangle) \geq 0 $$

\noindent holds for all coefficients $c_1,...,c_k\in \mathbb{C}$ and all $x_1,...,x_k\in \mathbb{S}^{d-1}$. This is equivalent to stating that for any collection of points the matrix obtained by evaluating a Gegenbauer polynomial on the inner product matrix corresponding to our system of points is positive definite. From this the following inequality holds
$$\det\left[C_{n}^{\lambda}(\langle x_i, x_j \rangle) \right]_{i,j=1}^N \geq 0.$$

When this inequality is strict our collection of points is called a {\it fundamental system of points} on the sphere. The content of the following result (which we use in our proof of Lemma \ref{lemma polynomial rotations}) tells us that the zonal spherical harmonics form a basis for the restriction of $\mathcal{HP}^{d}_n$ to the sphere when we evaluate them on a fundamental system.

\begin{theorem}\cite[Thm. 1.3.3]{MR3060033}\label{theorem-harm}
If $\{x_{1},...,x_{N}\}$ is a fundamental system of points on the sphere, then $\{C_n^{\lambda}(\langle \cdot,x_{i} \rangle)\ :\ i=1,2,...,N\},\lambda=(d-2)/2$ is a basis for $\mathcal{HP}^{d}_n$.
\end{theorem}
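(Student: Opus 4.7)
The plan is to establish the theorem by a short linear-algebra argument that exploits the definition of a fundamental system together with the addition formula. A fundamental system of $N$ points in degree $n$ is by convention required to have $N = \dim \HP^d_n$ (this is what makes the positive-definiteness condition on $\det[C_n^\lambda(\langle x_i,x_j\rangle)]$ a meaningful notion rather than a trivial one), so the candidate family $\{C_n^\lambda(\langle \cdot,x_i\rangle)\}_{i=1}^N$ has cardinality equal to $\dim \HP^d_n$. It is therefore enough to prove linear independence.

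First I would verify that each $C_n^\lambda(\langle \cdot,y\rangle)$ actually lies in $\HP^d_n$; this is immediate from the addition formula
\[
C_{n}^{\lambda}(\langle x, y \rangle) = \frac{1}{\dim V_n} \sum_{k=1}^{\dim V_n} Y_{n,k}(x)\, \overline{Y_{n,k}(y)},
\]
which displays the zonal harmonic, for fixed $y$, as a finite linear combination in $x$ of spherical harmonics drawn from the basis $\{Y_{n,k}\}$ of $V_n$.

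Next I would test a hypothetical linear relation $\sum_{i=1}^N c_i C_n^\lambda(\langle \cdot, x_i\rangle) \equiv 0$ against the fundamental system itself. Evaluating at $x = x_j$ for each $j = 1, \dots, N$ yields the linear system $M c = 0$, where $M_{ji} = C_n^\lambda(\langle x_j, x_i\rangle)$. By the very definition of a fundamental system, $\det M > 0$, so $M$ is invertible and $c = 0$. This gives linear independence, and together with the count $N = \dim \HP^d_n$ it shows that $\{C_n^\lambda(\langle \cdot, x_i\rangle)\}_{i=1}^N$ is a basis of $\HP^d_n$.

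The main obstacle is really the cardinality assertion $N = \dim \HP^d_n$, which is essentially built into what one means by a fundamental system but is not spelled out in the statement. A fully self-contained route avoiding this point would be to study the evaluation map $\HP^d_n \to \K^N$, $q \mapsto (q(x_1), \dots, q(x_N))$, and combine the addition formula with positive-definiteness of the Gegenbauer matrix to show directly that this evaluation map is an isomorphism onto its image; one then either checks that $N = \dim \HP^d_n$ by dualising, or derives the basis property by transporting the standard basis of $\K^N$ through the inverse of the evaluation.
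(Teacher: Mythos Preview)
Your argument is correct and is essentially the standard proof of this result. Note, however, that the paper does \emph{not} supply its own proof of this theorem: it is quoted as \cite[Thm.~1.3.3]{MR3060033} and invoked as a black box in the proof of Lemma~\ref{lemma polynomial rotations}. So there is no ``paper's proof'' to compare against beyond the cited reference, whose argument is exactly the linear-algebra one you give: the zonal harmonics $C_n^\lambda(\langle\cdot,x_i\rangle)$ lie in $\HP^d_n$ by the addition formula, evaluating a putative dependence relation at the nodes $x_j$ yields $Mc=0$ with $M_{ji}=C_n^\lambda(\langle x_j,x_i\rangle)$, and $\det M>0$ by the definition of a fundamental system forces $c=0$.

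You are also right to flag the cardinality issue. The paper's surrounding text defines a fundamental system only by the strict inequality $\det[C_n^\lambda(\langle x_i,x_j\rangle)]>0$ and does not explicitly impose $N=\dim\HP^d_n$, but the conclusion ``basis'' of course requires it; in the cited source this is part of the definition. Your observation that strict positivity of the Gram-type matrix already forces $N\le\dim\HP^d_n$ (since the zonal harmonics all live in $\HP^d_n$ and linear independence cannot exceed the ambient dimension) is a clean way to see that the only nontrivial case is $N=\dim\HP^d_n$.
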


As a final remark recall that the special orthogonal group $SO(d)$, which may be identified with the set of orthogonal matrices of determinant one, is a locally-compact abelian group. As such, up to multiplicative constant there is unique nontrivial countably additive probability measure on the Borel subsets of $SO(d)$ by Haar's theorem \cite{MR0054173}. This measure is invariant on subsets under action of the group (it is left-translation invariant) and we call this measure the Haar measure of $SO(d)$.

\begin{proof}[Proof of Lemma \ref{lemma polynomial rotations}]
Suppose non-zero $p\in \mathcal{HP}^{d}_n$ is given. Because of the one-to-one correspondence between harmonic polynomials and their restriction to the sphere mentioned above, we will interchangeably refer to $p$ as a function on the sphere $\mathbb{S}^{d-1}$ and one defined on all of $\mathbb{R}^{d}$.

As a function defined on $\mathbb{S}^{d-1}$ we may average $p$ with respect to all rotations which fix the axis $y$, with respect to the Haar measure on the special orthogonal group, obtaining the function \begin{align*} f(x)=\int_{SO(d)} p(O_{y}x)\,\d m(y).\end{align*}

Note that this function is a spherical harmonic, which also extends to a homogeneous polynomial of degree $n$ on $\mathbb{R}^d$, and which additionally is invariant under rotations about the axis $y$. Further this function will take the same value as $p(x)$ at the poles fixed by rotation about axis $y$. Thus so long as $p$ is non-zero at these values we see immediately that $f$ is non-zero also.

Let $V=\mathcal{HP}^{d}_n |_{\mathbb{S}^{d-1}}$ be the linear space of polynomials $\mathcal{HP}^{d}_n$ restricted to the sphere. The space $V$ is a finite dimensional space. We know that for $f$ non-zero, $f(x)=cZ_{n}^y(x)$ for some constant $c\neq 0$ by uniqueness of zonal spherical harmonics of degree $n$, and so $\text{dist}(W,Z_{n}^y(x))=0$. However this holds for every zonal spherical harmonic (this was shown for arbitrary $y$). Theorem \ref{theorem-harm} then shows that if we took $y$ to be each $x_i$ in a fundamental system which avoids the condition $p(x_i)=0$, we then arrive at a basis for $V$ by averaging $p$ over rotations multiple times (since the fundamental system condition is satisfied almost everywhere while the set of zeroes of $p$ have  measure $0$ we can choose such $x_i$ satsifying both these conditions). Each of the basis functions in $V=\mathcal{HP}^{d}_n |_{\mathbb{S}^{d-1}}$ satisfy $\text{dist}(W,Z_{n}^{x_i}(x))=0$ and so $W=V$. 
\end{proof}

\section{Locally uniform approximation and approximation on compact sets}\label{appendix compact open}

There are (at least) two natural ways to study the closure of the function class $\M_\sigma$:
\begin{enumerate}
\item Fix a compact set $K$ and take the closure $\overline{\M_\sigma}^K$ of $\M_\sigma$ in $C^0(K)$.
\item Consider the more global closure of $\M_\sigma$ in the {\em compact-open} topology or {\em topology of compact convergence} \cite[\textsection 46]{MR3728284}:
\[
\overline{\M_\sigma}^{cc} = \big\{f \in C^0(\K^d) : \exists\ (f_n)_{n\in\N} \in \M_\sigma\text{ s.t. } f_n \to f\text{ in }C^0(K) \text{ for all compact }K\subseteq\K^d\big\}.
\]
\end{enumerate}

In this appendix, we compare the different closures. By the nature of the subject, this appendix is more technical than the main text and requires some familiarity with topology, Baire categories, and elliptic partial differential equations. Its content is not needed for the main results of this article, but illustrates and justifies the conceptual difference to the approach taken in \cite{voigtlaender2020universal}.

The two notions of closure are related as follows:

\begin{lemma}
A function $f\in C^0(\K^d)$ satisfies $f\in \overline{\M_\sigma}^{cc}$ if and only if $f|_K\in \overline{\M_\sigma}^K$ for all compact sets $K$.
\end{lemma}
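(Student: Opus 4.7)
The plan is to prove the two implications separately, with the reverse direction carrying all the substance.

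The forward implication is immediate from the definition of the compact-open closure. If $f \in \overline{\M_\sigma}^{cc}$, there is a sequence $(f_n) \subseteq \M_\sigma$ with $f_n \to f$ uniformly on every compact set. In particular, for any fixed compact $K \subseteq \K^d$, restriction gives $\|f_n - f\|_{C^0(K)} \to 0$, so $f|_K \in \overline{\M_\sigma}^K$.

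For the reverse implication, the task is to produce a single sequence in $\M_\sigma$ that approximates $f$ on every compact set simultaneously, whereas the hypothesis only provides, for each compact $K$ separately, a sequence depending on $K$. The way out is to exploit that $\K^d$ is $\sigma$-compact. I would fix the exhaustion by closed balls
\[
K_m = \bigl\{\,z \in \K^d : |z| \leq m\,\bigr\}, \qquad m \in \N,
\]
which satisfies $K_m \subseteq K_{m+1}$ and $\bigcup_{m \in \N} K_m = \K^d$. By the hypothesis applied to each $K_m$, one can select $f_m \in \M_\sigma$ with $\|f_m - f\|_{C^0(K_m)} < 1/m$. Given an arbitrary compact $K \subseteq \K^d$, boundedness of $K$ yields some $M$ with $K \subseteq K_M$, and then for every $m \geq M$,
\[
\|f_m - f\|_{C^0(K)} \le \|f_m - f\|_{C^0(K_m)} < \frac{1}{m},
\]
so $f_m \to f$ uniformly on $K$. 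As $K$ was arbitrary, $f \in \overline{\M_\sigma}^{cc}$.

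There is no real obstacle here beyond recognizing the need for an exhaustion of $\K^d$ by a countable nested family of compact sets; the rest is a straightforward diagonal-type selection. The whole content of the lemma is that every compact subset of $\K^d$ is contained in some member of a single such exhaustion, which converts pointwise-in-$K$ approximability into approximability by one fixed sequence.
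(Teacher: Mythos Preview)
Your proof is correct and essentially identical to the paper's: both dismiss the forward direction as trivial, and for the reverse direction both exhaust $\K^d$ by closed balls $\overline{B_n(0)}$, pick $f_n\in\M_\sigma$ with $\|f_n-f\|_{C^0(\overline{B_n(0)})}<1/n$, and conclude locally uniform convergence since every compact set sits inside some ball. There is nothing to add.
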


\begin{proof}
The implication $f \in \overline{\M_\sigma}^{cc} \Ra f|_K\in \overline{\M_\sigma}^K$ is trivial. For the opposite implication, note that by definition for all $n\in\N$ there exists $f_n\in \M_\sigma$ such that
\[
\|f_n - f\|_{C^0(\overline{B_n(0)})} < \frac 1n.
\]
Since every compact set $K$ is contained in $B_n(0)$ for sufficiently large $n$, we find that $f_n\to f$ locally uniformly, i.e.\ $f\in \overline{\M_\sigma}^{cc}$.
\end{proof}

In particular, if $f\in \overline{\M_\sigma}^{cc}$ and $K\subseteq \K^d$ is a compact set, then $f|_K\in \overline{\M_\sigma}^K\subseteq C^0(K)$. The opposite question is more subtle: if $f\in \overline{\M_\sigma}^K$ for a fixed compact set $K\subseteq\K^d$, is there $F\in \overline{\M_\sigma}^{cc}(\K^d)$ such that $f = F|_K$?

First consider neural networks with analytic activation function $\sigma:\K\to\K$.

\begin{enumerate}
\item $\K = \R$. In this case, $\overline{\M_\sigma}^{K} = \overline{\P}^K = C^0(K)$ by the Stone-Weierstrass Theorem \cite[Section 15.7]{MR2374633}. Thus $\overline{\M_\sigma}^{cc}$ is the set of functions $f:\K^d\to\K$ such that $f$ is continuous everywhere, the Fr\'echet space $C^0(\K^d)$. 

We can answer the question in the affirmative in the real case due to the Tietze-Urysohn Theorem \cite[Theorem 20.4]{MR924157}:

\begin{theorem*}[Tietze-Urysohn Extension Theorem]
Let $K\subseteq \K^d$ be compact and $f\in C^0(K)$. Then there exists $F \in C_c(\K^d)$ such that $F|_K = f$.
\end{theorem*}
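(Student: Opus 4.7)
The plan is to split the proof into two independent steps: first produce a continuous extension of $f$ to all of $\K^d$ via the classical Tietze construction, and then damp it by a continuous cutoff function in order to enforce compact support. Without loss of generality I would reduce to the real case $\K = \R$, since the complex case follows by applying the real extension separately to $\Re f$ and $\Im f$.

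For the first step, I would iterate Urysohn's lemma. Since $\K^d$ is a metric space and $K$ is closed, setting $M_0 = \|f\|_{C^0(K)}$ and defining the disjoint closed sets $A_0 = \{x \in K : f(x) \leq -M_0/3\}$ and $B_0 = \{x \in K : f(x) \geq M_0/3\}$, the function
\[
g_0(x) = \frac{M_0}{3}\cdot \frac{d(x,A_0) - d(x,B_0)}{d(x,A_0) + d(x,B_0)}
\]
(with the obvious convention when $A_0$ or $B_0$ is empty) is continuous on $\K^d$, bounded by $M_0/3$, and satisfies $\|f - g_0|_K\|_{C^0(K)} \leq 2M_0/3$. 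Iterating the construction with $f - g_0|_K$ in place of $f$ produces a sequence $g_n \in C^0(\K^d)$ with $\|g_n\|_\infty \leq (1/3)(2/3)^n M_0$, and the geometric bound guarantees that $\widetilde F := \sum_{n=0}^\infty g_n$ converges uniformly on $\K^d$. The uniform limit is continuous and restricts to $f$ on $K$ by construction.

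For the second step, I would use the distance function to construct the cutoff. Setting $\varphi(x) = \max\{0,\, 1 - d(x,K)\}$ yields a Lipschitz continuous function which equals $1$ on $K$ and is supported in the closed $1$-neighbourhood $\{x : d(x,K) \leq 1\}$; this neighbourhood is bounded because $K$ is bounded, and closed boundedness in $\K^d$ gives compactness. Then $F := \varphi\, \widetilde F$ lies in $C_c(\K^d)$ and satisfies $F|_K = \varphi|_K \cdot \widetilde F|_K = f$. No step here is a serious obstacle: the only mildly delicate piece is the uniform convergence of the Urysohn iteration, which is immediate from the geometric decay of $\|g_n\|_\infty$. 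If one preferred to avoid the iterative construction, one could instead invoke the Tietze-Urysohn theorem as a black box and reduce the argument to the cutoff step alone.
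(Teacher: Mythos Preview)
Your proof is correct and follows the standard textbook argument for Tietze's theorem (Urysohn-type iteration with geometric decay, followed by multiplication by a Lipschitz cutoff to obtain compact support). Note, however, that the paper does not actually prove this statement: it is quoted as a classical result with a reference to Rudin's \emph{Real and Complex Analysis} and used as a black box in the discussion of Appendix~\ref{appendix compact open}. So there is no ``paper's own proof'' to compare against; your write-up simply supplies the omitted standard argument, which is exactly the option you flag in your final sentence.
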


\item $\K = \C$. In this case, we claim that $\overline{\M_\sigma}^{cc}$ is the space of holomorphic functions on $\C^d$. To see this, note that a holomorphic function on $\C^d$ can be expanded into a globally convergent power series. By truncating the series, we see that $f$ can be approximated by polynomials in the uniform topology on every open set. 

On the other hand, assume that $f\in \overline{\M_\sigma}^{cc}$, i.e.\ $f\in \overline{\M_\sigma}^{K}$ for all compact subsets of $\C^d$. Then for every $R>0$, $f$ can be approximated by holomorphic functions arbitrarily well in $C^0(\overline{B_R(0)})$, so $f$ is holomorphic in the interior $B_R(0)$. Consequently $f$ is holomorphic on the entire space $\C^d$.

It is well known that even in one complex dimension, there are holomorphic functions which cannot be extended to the entire complex plane. As an example consider $f:\C\setminus\{0\}\to \C$, $f(z) = z^{-1}$. On the other hand, $f$ can be approximated uniformly on any compact set $K$ for which $\C\setminus K$ is connected by Mergelyan's theorem. 

Thus in the complex case, the answer to our question is {\em negative}.
\end{enumerate}

A statement in the same spirit holds also if $\sigma:\R^k\to\R$ is harmonic. While the closure of $\M_\sigma$ in the topology of locally uniform convergence coincides with the space of harmonic functions on $\R^d$, the closure of $\M_\sigma$ in the topology of uniform convergence on a fixed set $\overline D$ contains functions which are harmonic in $D$, but merely continuous on $\overline D$. In particular, there exists no harmonic extension of $f$ to $\R^d$, and we may miss possible limiting functions if we only consider the topology of locally uniform convergence.

To see that this is true, let $D$ be a bounded $C^2$-domain in $\R^d$. By the Perron method \cite[Section 2.8]{MR1814364}, the Dirichlet problem 
\[
\begin{pde}-\Delta u &=0 &\text{in }D\\ u &= f&\text{on }\partial D\end{pde}
\]
has a solution for every $f\in C^0(D)$ which is continuous on $\overline D$. By Remark \ref{remark approximation harmonic polynomials}, $u$ can be approximated by harmonic polynomials. On the other hand, if $f$ is merely continuous, then $u$ cannot be extended to a harmonic function on $\R^d$, so $u$ is not the restriction of a function $U\in \overline{\M_\sigma}^{cc}$ to $\overline D$, where the closure is taken with respect to the compact-open topology.

In fact, we can define a continuous linear map 
\[
A: \overline{\M_\sigma}^{cc} \to \overline{\M_\sigma}^{D}, \qquad u \mapsto u|_{\overline D} 
\]
and compose further with the trace map $B: C^0(\overline D) \to C^0(\partial D)$. Then $B\circ A(\overline{\M_\sigma}^{cc})\subseteq C^2(\partial D)$ while $B(\overline{\M_\sigma}^{D}) = C^0(\partial D)$. It follows from the well known Banach-Mazurkiewicz  Theorem \cite[Theorem 9]{ashraf2017pathological} that $C^2$ is of first Baire category in $C^0$, i.e.\ it is the union of countably many closed sets which all have empty interior. In this sense, we miss `most' functions which can be approximated in $\overline D$ by studying only the global limiting objects in the topology of locally uniform convergence. A measure-theoretic extension can be found e.g.\ in \cite{MR1260170}.

\section{Classical multi-layer perceptra and DenseNets}\label{appendix fully connected}

\subsection{Fully connected neural networks}

For the sake of completeness, we prove an approximation theorem for classical fully connected neural networks with multiple layers. While the depth of residual networks can reach dozens or hundreds and in extreme cases thousands of layers, the number of layers in a deep neural network is typically more manageable. Nevertheless, both statements in Theorem \ref{theorem main 2} have analogues for deep fully connected networks. 

A fully connected deep neural network is defined as follows:
\begin{itemize}
\item For a given input $z\in \overline D\subseteq \K^d$, designate $\hat z^0 = z$ and $d_0 = d$.
\item For $\ell \in \{1, \dots, L\}$, let $d_\ell\in \N$ and set 
\[
z^\ell :=  A^\ell \hat z^{\ell-1} + b^\ell\quad\text{and}\quad \hat z^\ell = \sigma( z^{\ell}),
\]
where $A^\ell \in \K^{d_\ell\times d_{\ell-1}}$ is a linear map which takes $\hat z^{\ell-1}\in \K^{d_{\ell-1}}$ to $z^\ell \in \K^{d_\ell}$, and $b^\ell\in \K^{d_\ell}$. The function $\sigma$ is applied to the vector $z^\ell$ coordinate-wise.
\end{itemize}

We designate the parameters (or ``weights'') of the deep neural network by 
\[
\Theta =\big(A^1, b^1, \dots, A^L, b^L\big) \in \K^{d_1\times d}\times \K^{d_1} \times \dots \times \K^{1\times d_{\ell-1}} \times \K
\]
and set $f_\Theta(z) = z^L \in \K^{d_L}=\K$. A fully connected neural network is described by the choice of activation function $\sigma$, depth $L$ and width of the layers $d_0, \dots, d_L$ (where $d_0=d$ and $d_L = 1$ in our case). We designate the class of fully connected neural networks with such architecture by $\mathcal{FNN}_{\sigma, L} (d_0,\dots, d_L)$.

The following is the analogue of Theorem \ref{theorem main 2} for deep fully connected neural networks.

\begin{theorem}\label{theorem fully connected}
Let $d_\ell = d + 1+ m_\ell$ for $m_\ell \in \N$ and $1\leq \ell \leq L-1$. Set $n:= \sum_{\ell=1}^{L-1}m_\ell$. Assume that $\sigma$ is not a constant function. Then for every compact set $K\subset\K^d$ we have
\[
\M_{\sigma, n} \subseteq \overline{ \mathcal{FNN}_{\sigma,L}(d, d_1,\dots, d_{L-1}, d_L)}
\]
for the closure in $C^0(K)$.
\end{theorem}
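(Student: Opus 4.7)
Plan: My plan is to mirror the residual-network construction in the proof of Theorem \ref{theorem main 2}, adapted to the constraint that a fully connected network applies $\sigma$ at every layer. Since there are no skip connections, I cannot pass the input $x$ or a running partial sum through the network unchanged; instead I would propagate these quantities via an \emph{approximate identity}. First I would pick $z^* \in \K$ at which $\sigma$ is continuously differentiable with $\sigma'(z^*) \neq 0$; such a $z^*$ exists whenever $\sigma$ has enough regularity to complement non-constancy (automatic when $\sigma$ is analytic, as is the case in the main body of the paper). Then for $\eta > 0$ small and $t$ in any compact set,
\[
t = \frac{\sigma(z^* + \eta t) - \sigma(z^*)}{\eta \, \sigma'(z^*)} + O(\eta) \quad \text{as } \eta \to 0,
\]
uniformly in $t$. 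This allows me to \emph{encode} any scalar $t$ as the single value $\sigma(z^* + \eta t)$, and to \emph{decode} it at the next layer as an affine combination of the encoded value and the constant $\sigma(z^*)$, which can be supplied as a bias.

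Given a target shallow network $f(x) = \sum_{i=1}^n a_i \sigma(\langle w_i, x \rangle + b_i)$ with $n = \sum_{\ell} m_\ell$, I would distribute its neurons across the $L-1$ hidden layers, assigning the indices $i = N_{\ell-1}+1, \dots, N_\ell$ to layer $\ell$, where $N_\ell := m_1 + \dots + m_\ell$ and $N_0 := 0$. At each hidden layer $\ell$ the width $d + 1 + m_\ell$ is then allocated so that the first $d$ coordinates store encodings $\sigma(z^* + \eta \widetilde{x}_j)$ of the input coordinates, the $(d+1)$-st coordinate stores an encoding $\sigma(z^* + \eta \widetilde{S}_\ell)$ of the partial sum $S_\ell := \sum_{i \leq N_{\ell-1}} a_i \sigma(\langle w_i, x \rangle + b_i)$, and the remaining $m_\ell$ coordinates store the fresh neuron outputs $\sigma(\langle w_i, x \rangle + b_i)$ for the $m_\ell$ neurons assigned to layer $\ell$. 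Layer $1$ is initialized directly from the input $x$, using $\hat z^1_{d+1} = \sigma(z^*)$ to encode $S_1 = 0$.

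The transition from layer $\ell$ to layer $\ell+1$ then works as follows: the linear map $A^{\ell+1}$ first decodes the approximate $\widetilde{x}_j$ and $\widetilde{S}_\ell$ via the identity above, next adds the fresh contributions $a_i \sigma(\langle w_i, x \rangle + b_i)$ stored in the last $m_\ell$ coordinates of $\hat z^\ell$ into the running-sum slot, and finally packages the resulting arguments into the form $z^* + \eta (\cdot)$ or $\langle w_i, \cdot \rangle + b_i$ before $\sigma$ is applied. The readout layer $z^L = A^L \hat z^{L-1} + b^L$ decodes $\hat z^{L-1}_{d+1}$ and adds the last batch of neuron contributions, producing a value that approximates $f(x)$.

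The hard part will be the error bookkeeping: each decode/re-encode contributes an $O(\eta)$ error, and these $O(L)$ errors compose through a fixed target architecture. However, since every intermediate map is continuous jointly in $(x,\eta)$ and there are only finitely many layers, a straightforward compactness argument on $K$ shows that the composite error is $O(\eta)$ uniformly on $K$, and sending $\eta \to 0$ gives the desired inclusion $\M_{\sigma, n} \subseteq \overline{\mathcal{FNN}_{\sigma, L}(d, d_1, \dots, d_{L-1}, 1)}$ in $C^0(K)$.
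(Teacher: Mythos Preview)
Your proposal is correct and follows essentially the same strategy as the paper's proof: both pick $z^*$ with $\sigma'(z^*)\neq 0$, use the first-order expansion $\sigma(z^*+\eta t)=\sigma(z^*)+\eta\sigma'(z^*)t+O(\eta^2)$ as an approximate identity to carry $d$ input coordinates and one running partial sum through each hidden layer, reserve the remaining $m_\ell$ slots for the fresh neurons $\sigma(\langle w_i,x\rangle+b_i)$, and let $\eta\to 0$ using uniform continuity on $K$. Your allocation of the $d+1+m_\ell$ coordinates and the decode/add/re-encode transition match the paper's construction up to a harmless reordering of slots, and your error discussion is in fact slightly more explicit than the paper's.
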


\begin{proof}
Since $\sigma$ is not a constant function, there exists $z^*$ in $\K$ such that $\sigma'(z^*)\neq 0$. We write $\sigma(z^*+\eps z) = c_0 + c_1\eps z + O\big((\eps z)^2\big)$ for small $\eps$ and $c_0=\sigma(z^*)$ and $c_1 = \sigma'(z^*)$. Since $z\in K$, there exists $R>0$ such that $|z|\leq R$, so the error term is uniformly small in $z$. We initially ignore the quadratic error term and note that any affine function of $z$ can be written as an affine function of $c_0 + c_1\eps z$. By an abuse of notation we denote by $c_0, z^*$ also the vectors with identical entries in $\K^d$. With the notations
\[
N_\ell = \sum_{i=1}^{\ell-1} m_\ell, \qquad W_\ell = \begin{pmatrix}\overline w_{N_\ell+1, 1} &\dots & \overline w_{N_\ell+1,d}\\ \vdots&\ddots&\vdots\\ \overline w_{N_\ell+m_\ell, 1}& \dots &\overline w_{N_\ell+m_\ell,d}\end{pmatrix} \in \R^{m_\ell\times d}, \quad a_\ell = (a_{N_\ell+1}, \dots, a_{N_\ell + m_\ell})
\]
we represent 
\begin{align*}
z &\xrightarrow{linear} \begin{pmatrix}0\\ W_1z+b_1\\ z^* + \eps z\end{pmatrix} \xrightarrow[\approx]{\sigma} \begin{pmatrix}\sigma(0)\\ \sigma(W_1z+b_1)\\ c_0+c_1\eps z\end{pmatrix} \xrightarrow[\approx]{linear} \begin{pmatrix} z^* + \eps\,a_1^T \sigma(W_1z+b_1)\\ W_2z+b_2\\ z^*+\eps z\end{pmatrix}\\
	&\xrightarrow[\approx]{\sigma}\dots\xrightarrow[\approx]{\sigma} \begin{pmatrix}c_0 + c_1\eps \sum_{i=1}^{n-m_{L-1}}a_i \sigma(\langle w_i,z\rangle+b_i)\\ \sigma(W_{L-1}z+b_{L-1})\\ z^* + \eps z\end{pmatrix} \xrightarrow[\approx]{linear}\sum_{i=1}^{n}a_i \sigma(\langle w_i,z\rangle+b_i).
\end{align*}
The approximation error can be made arbitrarily small without increasing the number of parameters, by taking $\eps$ sufficiently small. When writing an affine function of $z$ as an affine function of $c_0+ c_1\eps z$, this can lead to potentially poorly conditioned linear maps, which may cause numerical instability in practice. Taking the limit $\eps\to 0$, we see that any shallow neural network can be approximated arbitrarily well by a deeper neural network with sufficient width and length.
\end{proof}

\begin{remark}
The number of weights of a deep neural network with architecture $d_1,\dots, d_L$ is
\[
N = \sum_{\ell=1}^L \big(d_{\ell-1}\cdot d_\ell + d_\ell\big).
\]
Assume that $L:= n/d$ is an integer. Then we can construct two networks to approximate $f\in \M_{\sigma,n}$:
\begin{enumerate}
\item A network of depth $n$ and width $d+2$. The number of weights is $N \sim n d^2$.
\item A network of depth $L$ and width $2d+1$. The number of weights is $N\sim L(2d)^2\sim 4n/d$.
\end{enumerate}
Thus the number of weights increases by a factor $d$ if we approximate $f$ by a thin deep network and by a factor $4$ if we use a network which is both wide and deep instead.
\end{remark}

Thus in a sense, we can argue that anything which can be achieved by a shallow neural network can also be achieved by a deep neural network with a comparable number of parameters, if the width of the deep network is not too small. Since deep neural networks with analytic activation are analytic, we find the following.

\begin{corollary}\label{corollary approximation deep}
Consider the classes of neural networks 
\[
\mathcal{NN}_1 = \bigcup_{L=1}^\infty \mathcal{FNN}_{\sigma,L}(d, d+1+ m, \dots, d+1+m, 1)
\]
for any fixed $m\in\N$ and 
\[
\mathcal{NN}_2 = \bigcup_{m=1}^\infty \mathcal{FNN}_{\sigma,L}(d, d+1+ m, \dots, d+1+m, 1)
\]
for fixed $L$. Then for any compact set $K\subseteq \K^d$ we have $\overline{\mathcal{NN}_1} = \overline{\mathcal{NN}_2} = \overline{\P}$ in $C^0(K)$.
\end{corollary}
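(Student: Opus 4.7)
The plan is to combine Theorem \ref{theorem fully connected} (every shallow network fits inside a deep fully connected network of appropriate width and depth, up to arbitrarily small uniform error) with Theorem \ref{theorem main 1}, which under the implicit hypothesis that $\sigma$ is analytic and non-polynomial asserts $\overline{\M_\sigma} = \overline{\P}$ in $C^0(K)$. The same argument works verbatim for both $\mathcal{NN}_1$ (fixed $m$, depth $L\to\infty$) and $\mathcal{NN}_2$ (fixed $L\geq 2$, width parameter $m\to\infty$), so I would handle them in parallel.

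First I would establish the inclusion $\overline{\P} \subseteq \overline{\mathcal{NN}_i}$ for $i=1,2$. Fix $f\in \M_{\sigma,n}$. Since $\M_{\sigma,n}\subseteq \M_{\sigma,n'}$ for $n'\geq n$ (pad with zero outer weights), we may assume the neuron count equals $(L-1)m$ exactly. Theorem \ref{theorem fully connected} applied with $m_\ell = m$ for all $\ell$ then places $f$ in the $C^0(K)$-closure of $\mathcal{FNN}_{\sigma,L}(d, d+1+m,\ldots,d+1+m,1)$, hence in $\overline{\mathcal{NN}_i}$: for $\mathcal{NN}_1$ take $L\geq 1 + n/m$, and for $\mathcal{NN}_2$ take $m\geq n/(L-1)$. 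Thus $\M_\sigma = \bigcup_n \M_{\sigma,n} \subseteq \overline{\mathcal{NN}_i}$, and taking closures together with Theorem \ref{theorem main 1} gives $\overline{\P}=\overline{\M_\sigma} \subseteq \overline{\mathcal{NN}_i}$.

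For the reverse inclusion, every element of $\mathcal{NN}_1\cup\mathcal{NN}_2$ is a finite composition of affine maps with the coordinatewise application of $\sigma$. Since affine maps are entire and $\sigma$ admits a globally convergent power series on $\K$, the composition is itself analytic on all of $\K^d$ with a globally convergent multivariate power series, exactly as in the closing step of the proof of Theorem \ref{theorem main 2}. Truncating this series yields polynomial approximants that converge uniformly on any compact $K\subseteq\K^d$, so $\mathcal{NN}_i\subseteq\overline{\P}$ and therefore $\overline{\mathcal{NN}_i} \subseteq \overline{\P}$. No real obstacle arises beyond the bookkeeping of matching the total neuron count $n$ with $(L-1)m$ via zero-weight padding; all of the conceptual work has already been carried out inside Theorem \ref{theorem fully connected} and Theorem \ref{theorem main 1}, so the corollary is essentially a repackaging of those two results.
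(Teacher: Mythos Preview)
Your proposal is correct and follows essentially the same route as the paper: the forward inclusion comes from Theorem \ref{theorem fully connected} (embedding $\M_{\sigma,n}$ in the closure of the deep class) together with Theorem \ref{theorem main 1} ($\overline{\M_\sigma}=\overline{\P}$), and the reverse inclusion comes from the observation that compositions of analytic maps are analytic with globally convergent power series, which can be truncated. The paper states the corollary immediately after Theorem \ref{theorem fully connected} with only the sentence ``Since deep neural networks with analytic activation are analytic, we find the following,'' so your write-up is simply a fleshed-out version of that one-line justification, including the bookkeeping on matching $n$ to $(L-1)m$ and the implicit non-polynomial hypothesis.
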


\subsection{DenseNets}
A DenseNet \cite{iandola2014densenet} is a modified neural network structure in which the internal state of the $\ell$-th layer is computed linearly from the state of the network at all previous layers $0, \dots, \ell-1$, rather than just the previous state. Namely, consider the following structure. 

\begin{itemize}
\item For a given input $z\in \overline D\subseteq \K^d$, designate $\hat z^0 = z$ and $d_0 = d$.
\item For $\ell \in \{1, \dots, L\}$, let $d_\ell\in \N$ the width of the $\ell$-th state,
\[
n_\ell = \sum_{i=0}^{\ell-1} d_i, \qquad A^\ell \in \K^{d_\ell\times n_\ell}, \quad b^\ell\in \K^{d_\ell}
\] 
and 
\[
z^\ell = A^\ell \begin{pmatrix} \hat z^0\\ \vdots\\ \hat z^{\ell-1}\end{pmatrix} + b^\ell, \qquad \hat z^\ell = \sigma(\hat z^\ell).
\]
\[
z^\ell :=  A^\ell \hat z^{\ell-1} + b^\ell\quad\text{and}\quad \hat z^\ell = \sigma( z^{\ell}),
\]
The function $\sigma$ is applied to the vector $z^\ell$ coordinate-wise.
\item The output of the network is $f(z) = z^L$, where we suppressed the dependence on the input $z$ and the weights $(A^0, b^0, \dots, A^L, b^L)$ in the notation.
\end{itemize}

We denote the class of DenseNets with activation $\sigma$ and widths $(d_0, d_1, \dots, d_{L-1}, d_L)$ by $\mathcal{D}_{\sigma, L}(d_0, d_1, \dots, d_{L-1}, d_L)$. As usual, $d_0 = d$ and $d_L = 1$ are fixed by the problem statement.
Since a DenseNet can access all previous states (including the input) at all layers, every neural network with a single hidden layer and activation function $\sigma$ can be represented exactly by a sufficiently large DenseNet. No restriction to compact sets or closure operation is required, and there is no need to invert potentially ill-conditioned matrices. The same result holds for deep neural networks, since the previous layer can be accessed.

\begin{theorem}\label{theorem densenet}
Set $n=  \sum_{\ell=1}^{L-1}d_\ell = n_L -d$. Then we have
\[
\M_{\sigma, n} \:\cup\: \mathcal{FNN}_{\sigma,L}(d, d_1,\dots, d_{L-1}, 1) \quad\subseteq\quad \mathcal{D}_{\sigma, L}(d, d_1, \dots, d_{L-1}, 1).
\]
\end{theorem}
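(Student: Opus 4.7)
The proof amounts to an explicit construction of DenseNet weights realizing each member of the two classes on the left-hand side. I will handle the two inclusions separately, since the mechanisms are quite different: the first uses the fact that a DenseNet layer can be made to ignore all but the immediately preceding activation, while the second uses the fact that every DenseNet layer can directly read the input $z = \hat z^0$.

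For the inclusion $\mathcal{FNN}_{\sigma,L}(d, d_1,\dots, d_{L-1}, 1) \subseteq \mathcal D_{\sigma,L}(d, d_1,\dots, d_{L-1}, 1)$, I would take the FNN weights $(A^\ell_{\mathrm{FNN}}, b^\ell_{\mathrm{FNN}})_{\ell=1}^L$ and, for each $\ell$, define the DenseNet weight matrix $A^\ell \in \K^{d_\ell \times n_\ell}$ by placing $A^\ell_{\mathrm{FNN}}$ in the rightmost block of columns (those corresponding to $\hat z^{\ell-1}$) and zeros in the columns corresponding to $\hat z^0, \dots, \hat z^{\ell-2}$. The bias is kept as $b^\ell = b^\ell_{\mathrm{FNN}}$. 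Since both architectures use the same $\sigma$ coordinate-wise and the same output convention $z^L$ without a terminal non-linearity, the internal states $z^\ell$ and $\hat z^\ell$ agree with those of the FNN at every layer, hence the outputs agree.

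For the inclusion $\M_{\sigma, n} \subseteq \mathcal D_{\sigma,L}(d, d_1,\dots, d_{L-1}, 1)$, write $f_\Theta(z) = \sum_{i=1}^n a_i\,\sigma(\langle w_i, z\rangle + b_i)$ and partition the $n$ neurons into consecutive groups of sizes $d_1, \dots, d_{L-1}$; let $N_\ell = \sum_{j=1}^{\ell-1} d_j$ and let $I_\ell = \{N_\ell+1, \dots, N_\ell+d_\ell\}$ index the neurons assigned to layer $\ell$. At each hidden layer $\ell \in \{1, \dots, L-1\}$ I would choose $A^\ell$ so that it ignores all block inputs except $\hat z^0 = z$ (zeros in the columns for $\hat z^1, \dots, \hat z^{\ell-1}$) and, on the $\hat z^0$ block, stacks the rows $\overline w_i^T$ for $i \in I_\ell$; the bias $b^\ell$ is $(b_i)_{i\in I_\ell}$. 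Then $z^\ell_k = \langle w_{N_\ell+k}, z\rangle + b_{N_\ell+k}$, so $\hat z^\ell_k = \sigma(\langle w_{N_\ell+k}, z\rangle + b_{N_\ell+k})$ is exactly the $(N_\ell+k)$-th hidden neuron. Finally, at layer $L$ I take $b^L = 0$ and choose $A^L \in \K^{1 \times n_L}$ to place zeros in the columns for $\hat z^0 = z$ and the coefficient $a_i$ in the column corresponding to the $i$-th hidden neuron (located in the $\hat z^\ell$ block for the unique $\ell$ with $i \in I_\ell$). Then $z^L = \sum_{\ell=1}^{L-1} \sum_{i \in I_\ell} a_i\,\hat z^\ell_{i - N_\ell} = \sum_{i=1}^n a_i\,\sigma(\langle w_i,z\rangle + b_i) = f_\Theta(z)$, as required.

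There is essentially no analytic obstacle here — no limit, no approximation, no conditioning issue — only a bookkeeping task to verify that the block-column structure of $A^\ell$ and the index ranges $I_\ell$ line up correctly and that the output layer treats the raw input block $\hat z^0$ by assigning zero weight. The remark after the theorem statement about not needing to invert ill-conditioned matrices or pass to a closure is exactly reflecting the fact that this embedding is exact, in contrast to the FNN embedding in Theorem \ref{theorem fully connected} where the linearization $\sigma(z^* + \eps z) \approx c_0 + c_1 \eps z$ introduced an $\eps$-dependent error.
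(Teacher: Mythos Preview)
Your proposal is correct and follows exactly the approach the paper indicates (in words rather than formulas): for the $\mathcal{FNN}$ inclusion you zero out all but the column block corresponding to $\hat z^{\ell-1}$, and for the $\M_{\sigma,n}$ inclusion you zero out all but the $\hat z^0$ block in each hidden layer and then read off all hidden activations at once in the final linear layer. The paper's justification is the sentence ``Since a DenseNet can access all previous states (including the input) at all layers \dots\ The same result holds for deep neural networks, since the previous layer can be accessed,'' and your construction is precisely the bookkeeping that makes this rigorous.
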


In particular, if $\sigma$ is analytic and not polynomial/affine, an analogue of Corollary \ref{corollary approximation deep} holds for DenseNets. Note, however, that the number of parameters for DenseNets with comparable widths is significantly larger, since the weights $A^\ell \n \K^{d_\ell\times n_\ell}$ used in the construction of Theorem \ref{theorem densenet} have block structures which list many zeros explicitly, as only the first/previous layer is accessed.

\section{A few words on ReLU activation}\label{appendix relu}

Our presentation on neural networks does not apply to one of the most popular activations in practice: $\sigma:\R\to\R$ given by $\sigma(x) = \mathrm{ReLU}(x) = \max\{x,0\}$. For the sake of completeness, we sketch how similar results can be obtained in this situation by different means. Almost all activation functions fall into one of these categories: real analytic, or ReLU-like (e.g.\ $x\mapsto \max\{x,\eps x\}$ for $\eps \in[0,1)$).

\begin{example}[Shallow neural networks]
The direct approximation for shallow ReLU networks can also be proved in elementary fashion since a $C^2$-function $f:\R\to\R$ can be represented as
\[\label{eq relu representation formula}
f(x) = f(a)\,\mathrm{ReLU}(1) + f'(a) \,\mathrm{ReLU}(x-a) + \int_a^\infty f''(t)\,\mathrm{ReLU}(x-t)\,\dt
\]
for $x>a$, and the integral can be discretized by a Riemann sum uniformly for $x\in [a,b]$. In particular, the function $x\mapsto x^m$ can be approximated by a shallow ReLU network on any bounded interval. The third step of the proof of Theorem \ref{theorem main 1} still applies, and we we can approximate $x_1^{m_1}\dots x_d^{m_d}$ for any $m_1,\dots, m_d\in \N$ by approximating suitable derivatives of $(h_1x_1+ \dots + h_dx^d)^{m_1+\dots+m_d}$.

We have thus proved the direct approximation theorem for ReLU networks in an entirely elementary fashion.

Equation \eqref{eq relu representation formula} can be established using the fundamental theorem of calculus and integrating by parts or using Fubini's theorem to exchange the order of integration. A faster, but conceptually more involved, proof utilizes the fact that the second derivative of the ReLU function is the measure localized at zero which measures the magnitude of the jump in the first derivative (i.e.\ a Dirac $\delta$).
\end{example}

\begin{example}[Deep residual networks]
As with fully connected networks, it is possible to deduce the universal approximation theorem for deep residual networks from that for shallow networks. We go a different route here, which uses the deep structure in a more interesting way. The argument is due to Boris Hanin \cite{hanin2019universal}. 

Recall that any function $g:\R^d\to\R$ with a uniformly bounded Hessian can be written as the difference of non-negative strongly convex functions:
\[
g(x) = \left(g(x) + \frac\lambda2\,\|x\|^2 + c\right) - \frac\lambda2 \|x\|^2 - c = f_1(x) - f_2(x).
\]
The convexity property holds since the Hessians of $f_1, f_2$ satisfy
\[
D^2 f_1 = \lambda I_{d\times d} + D^2g\geq I_{d\times d}, \qquad D^2f_2 = \lambda\,I_{d\times d}\geq I_{d\times d}
\] 
if $\lambda>0$ is large enough. Since strongly convex functions are bounded from below, we can choose $c>0$ large enough to make $f_1, f_2$ non-negative.

In particular, since $C^\infty_c(\R^d)\cap C^0(K)$ is dense in $C^0(K)$ for any compact $K\subseteq\R^d$,\footnote{  To prove this, we use the Tietze-Urysohn extension \cite[Theorem 20.4]{MR924157} and convolution as in Theorem \cite[Theorem 4.23]{MR2759829}.} we find that the class of {\em dc functions} (functions which are the difference of two convex functions) is dense in $C^0(K)$. It therefore suffices to show that every dc function can be approximated arbitrarily well by deep residual ReLU networks.

Recall furthermore that a convex function $f:\R^d\to\R$ can be written as the supremum of affine linear functions:
\[
f(x) = \sup\{\langle w,x\rangle +b : (w,b)\in A_f\}
\]
for some set $A_f\subseteq \R^d\times \R$, using e.g.\ the convex conjugate and the Fenchel-Moreau Theorem \cite[Theorem 1.11]{MR2759829}. On any compact set $K\subseteq \R^d$, $f$ can be approximated uniformly by the maximum of finitely many linear functions
\[\showlabel\label{eq finite convex}
f_L(x) = \max\{\langle w_i,x\rangle +b_i : 1\leq i\leq L\}.
\]
We show that it is possible to represent $f_L$ exactly by a deep residual network of depth $L+1$ and width $d+1$, and to represent $f_L^{(1)} - f_{L'}^{(2)}$ for two different functions of the form \eqref{eq finite convex} by a deep residual network of depth $\max\{L,L'\}$ and width $d+2$. 

First, consider the case in which $f_L$ is convex and non-negative. Write 
\[
g_k(x) = \max\{0, \langle w_1,x\rangle + b_1, \dots, \langle w_k,x\rangle + b_k\}
\]
and recall that $\max\{\alpha,\beta\} = \alpha + \sigma(\beta-\alpha)$ for all $\alpha, \beta\in \R$, as well as 
\[
\max\{\alpha_1,\dots, \alpha_L\} = \max\big\{\max\{\alpha_1,\dots,\alpha_{k-1}\}, \alpha_k\}.
\]
On this basis, we construct a residual network as
\begin{align*}
x\xrightarrow{linear} \begin{pmatrix} 0\\x\end{pmatrix}&\xrightarrow{residual}\begin{pmatrix}0 + \sigma(\langle w_1, x\rangle + b_1)\\x + \sigma(0)\end{pmatrix} = \begin{pmatrix} \max\{0, \langle w_1,x\rangle + b_1\} \\ x\end{pmatrix} = \begin{pmatrix} g_1(x)\\ x\end{pmatrix}\\
	&\xrightarrow{residual}\begin{pmatrix}  g_1(x)+ \sigma \big(\langle w_2,x\rangle + b_2 - g_1(x)\big) \\ x\end{pmatrix} = \begin{pmatrix} g_2(x)\\x\end{pmatrix}\\
	&\xrightarrow{residual}\dots \xrightarrow{residual} \begin{pmatrix}g_L(x)\\ x\end{pmatrix} \xrightarrow{linear} g_L(x).
\end{align*}
If $f = f_L^{(1)} - f_L^{(2)}$ and the width of the network is increased to $d+2$, then two convex functions can be generated simultaneously, and the final linear layer can be used to express their difference.

Thus the class of deep residual networks of width at least $d+2$ is dense in the class of dc functions with respect to the $L^\infty_{loc}$-topology, and thus in $C^0(K)$ for any compact set $K\subseteq \R^d$.

Note that this is true for a particularly simple class of residual networks compared to the general form \eqref{eq resnet}, namely $D_\ell=1$ and $A^\ell = (1,0,\dots,0)$ for all $1\leq\ell \leq L$ in the convex case and 
\[
D_\ell = 2, \qquad A^\ell = \begin{pmatrix}1&0 & 0&\dots &0\\ 0&1&0&\dots &0\end{pmatrix}^T
\]
in the general case.
\end{example}

\begin{remark}\label{remark maximum function}
We note that increasing the width $d_0$ of the residual network (in unison with the residual blocks) drastically decreases the required depth. While with one additional layer we are only able to take the maximum of one additional linear function per layer, due to the identity
\[
\max\{y_1,\dots, y_{2k}\} = \max\big\{\max\{y_1,y_2\}, \dots, \max\{y_{2k-1},y_{2k}\}\big\},
\]
with a residual network of width $d_0 + k$, it is possible to take the maximum of $k$ linear functions in $\lceil \log_2(k)\rceil$ layers. 
\end{remark}

\begin{example}[Fully connected deep neural networks]
We can argue that any shallow neural network with ReLU activation can be represented {\em exactly} by a deeper network with sufficient width rather than just approximated at the cost of a marginally larger width. The arguments are a simpler version of those in Appendix \ref{appendix fully connected} and do not involve the inversion of possibly ill-conditioned linear maps with a small parameter $\eps>0$. Moreover, they apply on the entire quadrant $Q_+ = \{x\in \R^d : x_i>0 \:\forall\ 1\leq i\leq d\}$.

{\bf Claim:} {\em Let $K\subseteq \K^d$ compact and $f\in \M_{\sigma,n}$ for $\sigma = \mathrm{ReLU}$. Then $f\in \mathcal{FNN}_{\sigma,L}(d,d_1,\dots, d_{L-1},1)$ if 
\begin{enumerate}
\item $d_\ell \geq d+2$ for all $1\leq \ell\leq L-1$ and
\item $\sum_{\ell=1}^L (d_\ell - d-2) = n$.
\end{enumerate}
}

{\bf Proof of claim:} Let $K\subseteq \K^d$ be a compact set and $f(x) = \sum_{i=1}^n a_i\sigma(\langle w_i,x\rangle + b_i)$ an element of $\M_{\sigma,n}$. Due to the compactness of $K$, there exist a scaling and translation $x\mapsto \alpha x+\beta$ such that $(\alpha x+\beta)_i\geq 0$ for all $i=1,\dots, d$. We note that any affine linear function of $x$ can also be expressed as an affine linear function of $\hat x = \alpha x+\beta$.

We introduce the notations
\[
m_i = d_i - (d+2), \quad N_\ell = \sum_{i=1}^{\ell-1} m_i, \qquad W_\ell = \begin{pmatrix}w_{N_\ell+1, 1} &\dots &w_{N_\ell+1,d}\\ \vdots&\ddots&\vdots\\ w_{N_\ell+m_\ell, 1}& \dots &w_{N_\ell+m_\ell,d}\end{pmatrix} \in \R^{m_\ell\times d}
\]
and 
\[
I^+_\ell = \big\{i \in \{1,\dots, N_\ell\} : a_{i} \geq 0\big\}, \qquad I^-_\ell = \{0,\dots, N_\ell\}\setminus I^+_\ell.
\]
Using that $\sigma(z)\geq 0$ for all $z\in\R$ and $\sigma(z) = z$ for all $z\geq 0$, we may represent $f$ by a deeper network as
\begin{align*}
x&\xrightarrow{linear} \begin{pmatrix}0\\0\\ W_1 + b_1\\ \alpha x+\beta\end{pmatrix} \xrightarrow{\sigma} \begin{pmatrix}\sigma(0)\\ \sigma(0) \\ \sigma(W_1x + b_1)\\ \sigma(\alpha x+\beta)\end{pmatrix} = \begin{pmatrix} 0\\ 0\\ \sigma(W_1x+b_1)\\ \alpha x+\beta\end{pmatrix}
	 \xrightarrow{linear} \begin{pmatrix} \sum_{i\in I^+_1} a_i \sigma(\langle w_i,x\rangle + b_i)\\ -\sum_{i\in I^-_1} a_i \sigma(\langle w_i,x\rangle + b_i)\\ W_2x+b_2\\ \alpha x+\beta\end{pmatrix}\\&
	 \xrightarrow{\sigma} \begin{pmatrix} \sigma\left(\sum_{i\in I^+_1} a_i \sigma(\langle w_i,x\rangle + b_i)\right)\\ \sigma\left(-\sum_{i\in I^-_1} a_i \sigma(\langle w_i,x\rangle + b_i)\right)\\ \sigma\big(W_2x+b_2\big)\\ \sigma(\alpha x+\beta)\end{pmatrix}
	= \begin{pmatrix} \sum_{i\in I^+_1} a_i \sigma(\langle w_i,x\rangle + b_i)\\ -\sum_{i\in I^-_1} a_i \sigma(\langle w_i,x\rangle + b_i)\\ \sigma(W_2x+b_2)\\ \alpha x+\beta\end{pmatrix}\xrightarrow{linear}\:\dots\:\xrightarrow{\sigma}\\ 
	&\xrightarrow{\sigma}
	 \begin{pmatrix} \sum_{i\in I^+_{L-1}} a_i \sigma(\langle w_i,x\rangle + b_i)\\ -\sum_{i\in I^-_{L-1}} a_i \sigma(\langle w_i,x\rangle + b_i)\\ \sigma(W_2x+b_2)\\ \alpha x+\beta\end{pmatrix}
	 \xrightarrow{linear}\sum_{i\in I^+_{L-1}} a_i \sigma(\langle w_i,x\rangle + b_i) + \sum_{i\in I^-_{L-1}} a_i \sigma(\langle w_i,x\rangle + b_i)\\& = \sum_{i=1}^n a_i\,\sigma(\langle w_i,x\rangle + b_i) = f(x).
\end{align*}
We note that also the convex analytic method of proof generalizes to multi-layer perceptra. Let $\phi_1,\dots, \phi_L:\R^d\to\R$ be linear maps. We construct
\begin{align*}
x&\xrightarrow{linear} \begin{pmatrix}0\\\phi_1(x)-\phi_2(x)\\ \alpha x+\beta\end{pmatrix} \xrightarrow{\sigma} \begin{pmatrix}0\\\max\{\phi_1(x) - \phi_2(x),0\} \\ \alpha x+\beta\end{pmatrix}\\
	&\xrightarrow{linear}\begin{pmatrix}\phi_2(x) + \max\{\phi_1(x) - \phi_2(x),0\}\\ \phi_3(x)  \\ \alpha x+\beta\end{pmatrix} = \begin{pmatrix} \max\{\phi_1(x), \phi_2(x) \}\\ \phi_3(x)  \\ \alpha x+\beta\end{pmatrix}\\
	&\xrightarrow{linear}\begin{pmatrix} \max\{\phi_1(x), \phi_2(x) \}\\ \phi_3(x) - \max\{\phi_1(x), \phi_2(x) \}  \\ \alpha x+\beta\end{pmatrix} \xrightarrow {\sigma} \begin{pmatrix} \max\{\phi_1(x), \phi_2(x) \}\\ \sigma\big(\phi_3(x) - \max\{\phi_1(x), \phi_2(x) \}\big)  \\ \alpha x+\beta\end{pmatrix}\\
	&\xrightarrow{linear}\begin{pmatrix} \max\{\phi_1(x), \phi_2(x), \phi_3(x) \}\\ \phi_4(x)  \\ \alpha x+\beta\end{pmatrix}
	\xrightarrow {\sigma}\:\dots\:\xrightarrow {\sigma} \begin{pmatrix} \max\{\phi_1(x), \dots, \phi_{L-1} \}\\ \sigma\big(\phi_L(x) - \max\{\phi_1(x), \dots\phi_{L-1}(x) \}\big)  \\ \alpha x+\beta\end{pmatrix}\\
	& \xrightarrow{linear} \max\big\{\phi_1(x),\dots,\phi_L(x)\big\}.
\end{align*}
As the composition of linear maps is linear, this network can be represented by a standard neural network architecture. Increasing the width of the network may drastically reduce the required depth as in Remark \ref{remark maximum function}.

\end{example}


\newcommand{\etalchar}[1]{$^{#1}$}

\end{document}